\documentclass[10.1pt, conference, letterpaper]{IEEEtran}

\usepackage{booktabs} % For formal tables
\usepackage{ifthen}
\usepackage{epsfig, times}
\usepackage{amssymb,amsmath,amsfonts,amsthm}
\usepackage{setspace}
\usepackage{array}
\usepackage{balance}
\usepackage{graphicx}
\usepackage{changes}
\usepackage{tabularx}
\usepackage{color}
\usepackage{bbm}
\usepackage{color,soul, colortbl} % Added colortbl
\usepackage{adjustbox} % Added
\usepackage{siunitx} % Added
\usepackage{url} 
\usepackage{listings}
\usepackage{comment}
\usepackage{caption}
\usepackage{subcaption}
%\usepackage{hyperref}
%\definechangesauthor[color=orange]{M}
%\definechangesauthor[color=blue]{B}
%\definechangesauthor[color=green]{G}
\definecolor{Gray}{gray}{0.9} % Added
\definecolor{darkgray}{rgb}{0.66, 0.66, 0.66}% Added

\providecommand{\ecameraready}{si}
\providecommand{\stampacommenti}{si} 

\long\def\symbolfootnote[#1]#2{\begingroup%
\def\thefootnote{\fnsymbol{footnote}}\footnote[#1]{#2}\endgroup}
\usepackage{mathtools}
\usepackage[hidelinks]{hyperref} 

\hypersetup{
  colorlinks   = false, %Colours links instead of ugly boxes
  urlcolor     = black, %Colour for external hyperlinks
  linkcolor    = black, %Colour of internal links
  citecolor   = black %Colour of citations
}

\newtheorem{theorem}{Theorem}
\newtheorem{lemma}{Lemma}

\newtheorem{remark}{Remark}[section]

\theoremstyle{definition}
\newtheorem{definition}{Definition}

\newif\ifcre
\ifthenelse{\equal{\ecameraready}{no}}{\cretrue}{\crefalse}
\ifcre
  \providecommand{\forex}[1]{}
   \providecommand{\forcr}[1]{#1}
\else
   \providecommand{\forex}[1]{#1}
   \providecommand{\forcr}[1]{}
\fi

\newif\ifcom
\ifthenelse{\equal{\stampacommenti}{si}}{\comtrue}{\comfalse}

\ifcom
   \providecommand{\com}[1]{{\color{red}\emph{#1}}}
\else
   \providecommand{\com}[1]{}
\fi

%%%%%%%%%%%%%%%%%%%%%%%%%%%%%%%%%%%%%%%%%%%%%%%%%%%%%%%%%%%%%%%%%%

 %% negation of \in
% comb(n,p) = number of combinations of p among n
% per \mathds{N}, per fare l'insieme degli interi positivi maggiori di zero

%\newcommand{\fref}[1]{Figure~\ref{#1}}
\newcommand{\fref}[1]{Fig.~\ref{#1}}
\newcommand{\sref}[1]{Section~\ref{#1}}
\newcommand{\cref}[1]{Chapter~\ref{#1}}

\newcommand{\eref}[1]{Equation~(\ref{#1})}

\newcommand{\lref}[1]{Lemma~\ref{#1}}
\newcommand{\thref}[1]{Theorem~\ref{#1}}

%%% indicator functions

\def\bearn{\begin{eqnarray*}}
\def\eearn{\end{eqnarray*}}
\def\bear{\begin{eqnarray}}
\def\eear{\end{eqnarray}}
\def\barr{\begin{array}}
\def\earr{\end{array}}
% matrix with column formats in argument
% for example \bmat{cccc}
\def\bmat{\left(\begin{array}}
\def\emat{\end{array}\right)}

%%%roba mia%%% shortcut for proofs
\def\bp{\small\begin{list}{}\item \textbf{Proof:}}
\def\ep{\end{list} \qed \normalsize}

\def\bp{\hspace{4mm}\emph{Proof:}}

\def\bd{\begin{definition}}
\def\ed{\end{definition}}
\def\bt{\begin{theorem}}
\def\et{\end{theorem}}
\def\be{\begin{center}\begin{equation}}
\def\ee{\end{equation}\end{center}}
\def\bc{\begin{corollary}}
\def\ec{\end{corollary}}
\def\bl{\begin{lemma}}
\def\el{\end{lemma}}
\def\br{\begin{remark}}
\def\er{\end{remark}}

%%%%%%%% is distributed as
\makeatletter
\newcommand{\distas}[1]{\mathbin{\overset{#1}{\kern\z@\sim}}}%
\newsavebox{\mybox}\newsavebox{\mysim}
\newcommand{\distras}[1]{%
  \savebox{\mybox}{\hbox{\kern3pt$\scriptstyle#1$\kern3pt}}%
  \savebox{\mysim}{\hbox{$\sim$}}%
  \mathbin{\overset{#1}{\kern\z@\resizebox{\wd\mybox}{\ht\mysim}{$\sim$}}}%
}
\makeatother
%%%%%%%%%% end

%new command norm

%%%%%%%%%%%%%%%%%%%%%%%%%%%%%%%%%%%%%%%%%%%%%%%%%%%%%%%%%%%%%%%%%%%%%%%%%%%%%%%%%%%%%%%%%%%%%%%%%%%%%%%%%%%%%%%%

\newtheorem{myproblem}{Problem}
\newtheorem{prop}{Proposition}

%%%%%%%%%%%%%%%%%%%%%%%%%%%%%%%%%%%%%%%%%%%%%%%%%%%%%%%%%%%%%%%%%%%%%%%%%%%%%%%
%%%%%%%%%%%%%%%%%%%%%%%%%%%%%%%%%%%%%%%%%%%%%%%%%%%%%%%%%%%%%%%%%%%%%%%%%%%%%%
%
\usepackage{dsfont}
\usepackage{graphicx}
\usepackage{caption}
% \usepackage{algorithm,algorithmicx}
% Algorithms
\usepackage{algorithmic}
\usepackage{algorithm}

%\floatplacement{algorithm}{tbp}
%\usepackage{algpseudocode}

\usepackage[shortlabels]{enumitem}
\usepackage{ulem}
\usepackage{physics}  %% for norm and abs

\usepackage{endnotes}

\usepackage{nccmath}
\captionsetup{font={small}} 
\usepackage{endnotes}

\let\footnote=\endnote
\usepackage{mathtools}

\pagestyle{empty}
\usepackage[noadjust]{cite}
\usepackage{enumitem}
\setlist{nolistsep}
\setlist[itemize]{leftmargin=*}
\newtheorem{corollary}{Corollary}[theorem]

 \usepackage{acronym}
 \newacro{FC}{Floating Computing}
 \newacro{FG}{Floating Gossip}
 \newacro{ML}{Machine Learning}
 \newacro{RZ}{Replication Zone}
 
\begin{document}

\title{On limit performance ("learning capacity") of Gossip Learning in dynamic settings,  or location-based collaborative (fully) distributed model training in dynamic settings}

\title{On the Limit Performance of Floating Gossip}

% \title{The Learning Capacity of Floating Gossip}

\author{
\IEEEauthorblockN{
Gianluca Rizzo,\IEEEauthorrefmark{1}\IEEEauthorrefmark{3}
Noelia Perez Palma,\IEEEauthorrefmark{2}
Marco Ajmone Marsan,\IEEEauthorrefmark{2}
and Vincenzo Mancuso\IEEEauthorrefmark{2}
}
\IEEEauthorblockA{gianluca.rizzo@hevs.ch,
noelia.perez@imdea.org, ajmone@polito.it, vincenzo.mancuso@imdea.org}
%\IEEEauthorblockA{\IEEEauthorrefmark{1} HES SO Valais, Switzerland\;\;
%\IEEEauthorrefmark{2} University of Bern, Switzerland\;\;
%\IEEEauthorrefmark{3}University of Luxembourg, Luxembourg\;\; 
%\IEEEauthorrefmark{4}Luxembourg Institute of Science and Technology (LIST), Luxembourg}
%
\IEEEauthorblockA{
\IEEEauthorrefmark{1} HES SO Valais\;\;
\IEEEauthorrefmark{2} Institute IMDEA Networks\;\;
\IEEEauthorrefmark{3} Università di Foggia\;\;
%\IEEEauthorrefmark{4} Politecnico di Torino
}
}

% The default list of authors is too long for headers}
%\renewcommand{\shortauthors}{B. Trovato et. al.}

\maketitle
\begin{abstract} 
In this paper we investigate the limit performance of Floating Gossip, a new, fully distributed Gossip Learning scheme which relies on Floating Content to implement location-based probabilistic evolution of machine learning models in an infrastructure-less manner. 

We consider dynamic scenarios where continuous learning is necessary, and we adopt a mean field approach to investigate the limit performance of Floating Gossip in terms of amount of data that users can incorporate into their models, as a function of the main system parameters.
Different from existing approaches in which either communication or computing aspects of Gossip Learning are analyzed and optimized, our approach accounts for the compound impact of both aspects.
We validate our results through detailed simulations, proving good accuracy.
Our model shows that Floating Gossip can be very effective in implementing continuous training and update of machine learning models in a cooperative manner, based on opportunistic exchanges among moving users.
%their ML model instances, until the size of ML models become so large that the opportunistic exchange becomes problematic.
%\com{Floating Gossip can provide a very effective approach for building ML models in areas where a network infrastructure is not available (e.g., in a post-disaster area) or the exchange of raw data is not possible.}
%alternative title: On Computational Capacity of Infrastructure-less Opportunistic Mobile Networks (mobile Gossip Computing)
%\com{I would use "gossip learning" or "dynamic GL" instead of "FG".}
\end{abstract}
\section{Introduction}

With the standardization of the 5G NR (New Radio) sidelink \cite{garcia2021tutorial}, in addition to the LTE sidelink \cite{molina2017lte}, WiFi direct \cite{ieee1999wireless}, and Bluetooth 5.2 \cite{woolley2020bluetooth}, a number of options now exist for implementing wireless D2D (Device to Device) communications \cite{ansari20175g}.
 Such rich set of technologies is likely to foster the development of new services based on the direct exchange of information among users. This will facilitate
the implementation of context-aware location-based 
%opportunistic 
services such as mobile crowdsensing applications, or services like Gigwalk \cite{GigwalkW92:online}, Waze \cite{Drivingd30:online}, and Millionagents \cite{MillionA18:online}, to name a few, since users can share location and context through D2D automatically.

In particular, services that require the continuous acquisition of fresh information (we will refer to them with the generic term \emph{observations}) about some specific context, and the subsequent elaboration in each user equipment (UE) of a machine learning (ML) model are poised to become common. Such ML models will allow UEs to devise up-to-date strategies for the optimization of a utility function while performing a task.

In this paper we study the effectiveness of D2D-based opportunistic services to support privacy-preserving ML-based decision making in UEs that operate autonomously, without the need for a RAN infrastructure, without centralized support, and without exchanging any observation. Our main goal is to determine the maximum amount of information that a group of UEs can autonomously learn and incorporate in their ML models, without centralized coordination, only exploiting their own computation capacity combined with D2D communications and gossip learning (GL)~\cite{ormandi2013gossip}. The GL approach is beneficial both in terms of maintaining data out of  reach of indirect observers---hence avoiding tampering with and exposing private information---and in terms of scalability, because it only moves ML models based on observations, not the observations themselves. Note that a reduction of the volume of data results in simpler processing at mobile sensing platforms  \cite{wang3317689} thus making simple devices---e.g., mobile UEs---able to participate in model training processes.

%The problem we tackle is within the domains of crowdsensing and ML, but, to the best of our knowledge, no previous paper has tried to characterize the intrinsic limits to ML in a fully distributed and opportunistic environment like ours.

Several works have previously looked at components of the system we consider. For example, a number of works on Floating Content (FC) have characterized the opportunistic D2D aspect~\cite{floating, ali2013tr,manzo2017performance}. Other works on GL have tackled the ML aspects~\cite{LSZXHJ21,GLGS19,HEGEDUS2021109}.
We name the system that we propose and analyze Floating Gossip (FG), because of its roots in GL and in FC augmented with computing within UEs.
Our investigation is based on a mean field approach validated through detailed simulation experiments. Specifically, the main contributions of this work are the following.

\begin{itemize}
    \item We present Floating Gossip, a fully distributed Gossip Learning  scheme based on localized probabilistic storage of models in an infrastructure-less manner.
    \item We derive a mean field-based analytical approach to modeling the limit performance of FG, in terms of \textit{model freshness} (i.e., of average time elapsed since the collection of the most recent observation included in an ML model) and of \textit{learning capacity}, i.e., of the maximum amount of observations which can be incorporated in an ML model.
    %\textcolor{cyan}{\sout{Note that the model freshness coincides with the average Age of Information (AoI) when time of collection coincides with time of generation.}}
    \item We evaluate numerically our results, validating our assumptions against simulation under different configurations, showing the accuracy of our mean field approach, and characterizing the properties of FG limit performance as a function of the main system parameters. 
\end{itemize}
%Our results highlight the show that
%\item Model freshness, i.e., average time elapsed since the collection of the most recent observation included in a ML model. Note that the model freshness coincides with the average Age of Information (AoI) if time of collection coincides with time of generation.
%\item Learning capacity, i.e., the maximum amount of observations which can be incorporated in a ML model. Note that learning capacity depends on computing and storage capacity at UEs as well as efficiency of FC, and in particular on the average number of nodes whose model incorporates a given observation.
% \item We evaluate numerically our results, validating our assumptions against simulation, under different mobility models, showing the accuracy of our mean field approach, and characterizing the properties of FC storage capacity as a function of the main system parameters.
% \end{itemize}
%\com{Our results show that ...
%}
%\com{The rest of this paper is organized as follows ... }

\section{Related Work}
%\com{shorten a lot, possibly merge with intro, search and add missing citations- you can fish them from our FC capacity journal paper and, mainly, from Mina's GL journal. Limit to the max citation of our works to avoid violating double blind}

Our work is inspired by Gossip Learning (GL)~\cite{ormandi2013gossip}, a collaborative ML approach that is motivated mainly by the need to guarantee data privacy and scalability. 
%GL assumes that the raw data collected by each device is not shared with other devices. 
In GL, learning is achieved by combining  model training on the local dataset of each node (updated over time with new observations) with model sharing and merging  with other devices.
We use GL with the restriction that model sharing can only occur over D2D within the FC boundaries, which imposes specific performance constraints in addition to computation constraints. A similar system has been proposed and studied in terms of accuracy of generated models in~\cite{LSZXHJ21}. In that work, federated learning relies on opportunistic model exchanges like in our work, and the authors show that encounter-based learning can be effective. However, they assume that training data can be at least partially exchanged over D2D, thus exposing data to privacy risks. 

Giaretta and Girdzijauskas~\cite{GLGS19} consider a static network, and analyze the impact on the behavior of GL of data distribution, of network connectivity, and communication speed. Our work considers the same aspects, adding the peculiarities of a dynamic,  opportunistic communication environment.

%takes not well explored assumptions on those three aspects. 

First, while a fully distributed data model is often assumed, in which each device only owns a data point and trains the model once (see~\cite{HEGEDUS2021109} and references therein) --- only the authors of~\cite{GLGS19} propose to train over multiple points, and present a simulation study for its performance impact --- we are more general, because we allow not only devices to train multiple times on different data points, but also account for the possibility that multiple devices might observe the same event and therefore own non-unique data points. The latter has not been considered in other works so far. 

Second, network connectivity is typically taken as stable, with little or no  limitations imposed by physical distance and interconnection infrastructure, which is often fully comprised within a same data center or within a multi-thread computing architecture~\cite{HEGEDUS2021109,JDAVCS18, BLOT2019287}. Instead, network connectivity is a key factor, which can slow down the convergence of GL, as shown for the particular case of stochastic gradient descent algorithms in~\cite{pmlr-v139-chen21y}.
We consider only opportunistic communications of physically distinct and not co-deployed devices, with no infrastructure support at all, so that the number of neighbors that can exchange models is limited. Most important, we consider dynamic topologies with churn in UEs, hence timely and opportunistically exchange of models becomes fundamental.

Third, while many works that focus on networked devices  consider unrealistically high communication speed or neglect transmission time and connection establishment overheads~\cite{GJZYWC22}, or neglect communications delay almost completely~\cite{MinaGM21}, we account for the limitations of D2D connectivity and protocols, which consist not only in relatively low data rates, but also in the fact that a connection takes time to be established and a device might be busy exchanging data with a UE when a new communication opportunity arises, which is therefore missed. Hence, differently from existing approaches in which either communication or computing aspects of GL are analyzed and optimized, here we focus on the compound impact of both aspects. 

Eventually, we remark that there are no comprehensive analytical models to evaluate the performance of GL, and all reviewed papers show results based on simulations or real trace analysis. In contrast, we only resort to simulation to validate the complex model developed in this paper. The closest work, related to the analytical methodology used in our work is the seminal work of Chaintreau \emph{et al.} on gossiping as a tool to spread information~\cite{chaintreau2009age}, which shows how to use mean field analysis on the epidemic diffusion of messages. We also use mean field analysis, but we account for the specifics of learning and computing in the generation of the information to be shared, and the specificity of FC as a means to keep models available in a given area of interest.      

\section{The Floating Gossip Scheme---System model}
\label{sec:FGscheme}

%\vm{Since FG is a new thing defined in this paper, I would put less emphasis on related work and move it to the end of the paper. In a new Section II, I would rather provide the reader with the definition of FG and of its functions. I would use three subsections: in the first we can put high-level info taken from the currently available intro, and at the beginning of Section III, in the second subsection we can be more formal and present the definitions currently given in Section III, while the third subsection can contain all our assumptions (which are now spread over  Sections III to V).}

\subsection{What's Floating Gossip?}
%\vm{high-level info}
We consider a dynamic environment where many events happen, such as a crowded theme park or a post-disaster area. 
%Over this environment, a number of actors (humans or machines, that we term \emph{nodes}) operate for a variable amount of time.
During their stay in the environment, a number of actors (humans or machines, that we term \emph{nodes})  collect information that helps them make decisions and/or perform some task. The individually collected information consist\textcolor{cyan}{s} in local viewpoints (termed \emph{observations}) of what is happening in the environment at a given time. By using observations, each node builds its own partial, incomplete representation of the environment (termed \emph{model}), and may therefore take potentially sub-optimal decisions. %Indeed, possessing a global (or at least, a less partial) model can be very valuable to improve decisions' quality.  
%For such an environment, a centralized scheme for observation collection and fusion in a model might raise privacy concerns, or not be viable when communication between nodes and the central infrastructure is not available (this typically happens in a post-disaster scenario). 
To go beyond such limitations, we consider that nodes periodically combine their models with the ones received through D2D opportunistic exchanges.
%exchange their models using D2D whenever they come in proximity of one another. Then, by combining their own models with the ones received through opportunistic exchanges, 
%they generate a better model of the state of the environment.
The combined model is a potentially better representation of the environment because it incorporates a larger number of observations, some older, some more recent. 
In general, the utility of a model is related to the number of incorporated observations, as well as to the time at which they have been collected. 
%Indeed, since the system state evolves over time, the more recent and complete is the model obtained by fusion, the better can be the ensuing decisions.

Furthermore, we consider that models are relevant only in specific geographical areas, and that observations become obsolete after a fixed amount of time. Therefore stale observations are discarded, and nodes moving outside of the region of interest drop the instances of the models they own. 

Our approach is thus similar to the one adopted in GL, but it is based on models and observations whose availability persists in the  region of interest thanks to an opportunistic floating content scheme~\cite{Ott2011}. 
%We term the resulting learning scheme Floating Gossip (FG). While we use this more general term, o
Our approach can be seen as a machine learning process, where each observation is incorporated into the model (through a \emph{training} algorithm), and models are exchanged and fused (through a \emph{merging} algorithm), as in GL. In the rest of this paper we will refer to this case, but we will not address any specific ML algorithm, so that our model is general enough to be applicable to several different ML techniques.
 
\subsection{Some definitions}

We consider a population of nodes that cooperate in constructing a multidimensional representation of the environment in which they operate. Each dimension of the representation is associated with a model.

For the sake of simplicity we describe the system operations in the case of one model, concerned with one dimension of the environment (e.g., the road interruptions in the disaster area), but later we will also look at the case of several models evolving in parallel. 

Nodes move within an area, a portion of which defines the model Replication Zone (RZ). When entering the RZ, nodes have a \emph{model structure} (also called \emph{default model}), not yet populated with fresh data (for example, a rescue team entering a disaster area has a map that does not yet contain recent information on the critical issues in the area). While in the RZ, nodes collect data (i.e., observations) that are used to integrate the model structure with a training operation. By so doing, each node generates and updates its own \emph{model instance}. 

Each node is equipped with a wireless transceiver, and a positioning system that gives its position in space.
We say two nodes are \textit {in contact} when they are able to directly exchange information via wireless communications.

When two nodes come in contact of one another within the RZ, they may exchange their model instances by means of D2D, provided that the nodes are not  already engaged in other D2D exchanges, or that their models do not incorporate identical sets of observations. 
Each node then produces a better informed version of its own model instance by merging it with the received one.

A model is thus a description of the environment according to one dimension, and can be used to support decisions related to such dimension. %For example, a model of the road system in a disaster area allows finding paths to reach a rescue location avoiding road interruptions. 
The local model instance of a node is generally different from that of other nodes. 
A model instance is trained with the node's own observations (which form the model \textit{local training set}), and can be merged with other nodes' model instances to incorporate their observations (the union of the local training sets of all merged models forms the model total training set). 

The model training operation is a transformation which takes as input a model instance, and a new set of one or more observations, and gives as output a new model instance, with a training set given by the union of the training set of the old model instance and the new set of observations. In what follows we assume that model training is performed one observation at a time, though our approach easily extends to the case in which all observations in the new set are incorporated as a batch in the model instance.

The \emph{merging} operation is a transformation which takes as input two 
%\textcolor{red}{JUST TWO OR ALSO MORE THAN TWO?} or more 
instances of a model, and gives as output a new model instance, whose training set is the union of the training sets of both of the input instances. It occurs after an exchange of models among nodes, if the newly received model instance(s) incorporate observations that were not present in the local model instance.

The process by which the new model is derived is not specified in our work. Typically, in the case of Artificial Neural Networks (ANNs), the coefficients of the model obtained through merging are derived as a weighted average of the coefficients of the merged model instances, though more complex approaches are possible. Such a definition of the merging operation seems to suggest that it produces a model instance with the same performance (e.g., in terms of accuracy) as a model instance obtained by training over the union of the two training sets. However, this is well known to be an approximation.

Every time a node collects a new observation, it uses the new observation to train its local instance of the model.
The \emph{age} of an observation is the time since its generation.
%As a result, in general, a fraction of the observations within a model instance training set are endogenous, i.e., incorporated in the instance through training by the node on which they reside, while the others are exogenous, that is, incorporated through merging. 
%Model exchanges take place whenever two nodes come in contact of each other, using D2D communications. 
%Moreover, each observation is associated to a single specific model among those evolving in the RZ, and it can thus be incorporated only in an instance of that model.
%\com{ it needs some connection with practical applications.} 
Finally, users exiting the RZ discard their local model instance, as well as all collected observations.

\subsection{Modeling assumptions} 
\label{ss:assumptions}
%\vm{all our assumptions (which were spread over  various sections)}
%To model the operation of FG, we introduce some assumptions, some of which only serve to make the model tractable, and which will be later validated via simulation. 
%First of all, we use a mean field approach~\cite{}, and hence characterize FG's average behavior in steady state in a 

We consider nodes moving over the plane  according to a stationary mobility model such that, at any time instant, nodes are uniformly distributed in space. We consider the RZ to be a finite region of the plane, of arbitrary shape and area $A$. 
With $\alpha$ we denote the mean rate at which nodes enter the RZ, (which coincides with the mean rate at which nodes exit the same area because of equilibrium). 

We assume that the environment is constantly changing, thus requiring a continuous process of observation collection and incorporation in the models. We assume that there exist $M$ models in total, and each node can hold a single instance of up to $W$ models, assuming that a realistic implementation of FG will impose such practical limitation. When the number of models $M$ is larger than $W$, we assume each node chooses a subset of models of interest, which is equivalent to subscribe to up to $W$ observation channels. For the remaining $M-W$ models, the node will not accept any instance, nor it will record any observation associated to them.
To simplify the notation, in this paper we derive analytical expressions that hold for $M\ge W$. However, substituting $\min(M,W)$ for $M$ would make the same expressions valid for the general case.

%Any approximate global model of the environment can only be obtained by sharing observations collected by multiple mobile nodes in a distributed way, which we assume is implemented via replication of local model instances built by individual nodes, rather than by exchanging observations directly, as is done in legacy GL systems. 
Differently from GL, in FG learning occurs in a fully distributed way, using opportunistic data exchange
%\textcolor{cyan}{\sout{ without the help of a network infrastructure. Thus, FG relies
%on FC~\cite{floating} to ensure a target probability of model persistence and availability. 
%In turn, this means that, to transfer models, we resort to}, 
through D2D protocols. We partially neglect the overhead of such protocols and retain two main aspects: $i)$ connection establishment takes time $t_0$, which we consider as a constant, and $ii)$ data transfer is not instantaneous, but depends on the quantity of data to exchange bidirectionally, since we consider the transfer rate to be fixed. We also assume that data exchanges occur pairwise, and it is not possible to create and manage groups with more than two connected nodes, which would be highly unstable due to mobility. We say that two nodes establishing a connection or exchanging models are \textit{busy}, so they cannot accept any request to connect with other nodes until they finish and immediately disconnect.   

These parameters are important to determine the success of a model transfer upon a contact between two nodes. Other important factors are the frequency of contacts observed by each node, $g$, and the duration of contacts, which is denoted by $t_c$. %and represents the time during which two mobile nodes are closer than a transmission range interval. %Having adopted a mean field model, 
The values of $g$ and $t_c$ are the same for all nodes. In particular, $t_c$ is a random variable with 
probability density function (pdf) denoted by $f(t_c)$. The mean time necessary for the transfer of a model instance between two nodes is instead denoted as $T_L$, which is a function of the capacity of the channel between two nodes in contact, as well as of the size of the model instances, which we assume to be equal to $L$ bits. The order in which model instances are transferred upon a contact is important, but since scheduling optimization is out of the scope of our investigation, we assume that models to be transferred are chosen at random among the available ones.  

Once exchanged, instances belonging to the same model are then merged through local computation, again without the help of an infrastructure. We assume that received model instances are not used and propagated until they have been fused with the local instance. This requires storage and computing resources at the nodes not only for incorporating observations into local model instances (training), but also to fuse different instances of the same model, when they are received from neighbors (merging). We assume that storage/compute resources are used as in a system with two FIFO queues, one for training and one for merging. We assume a single shared service with finite capacity, that  gives strict priority to merging tasks, with no preemption of training tasks. We do this because merging tasks move more information at once than training tasks, so they are more important to enable the derivation of better models. For simplicity, we assume that queues have no buffering limit, although in practice we will show that they need limited storage space. Training and merging times are assumed to be constant, and are indicated as $T_T$ and $T_M$, respectively.

Moreover, since observations are sequentially generated and model instances are asynchronously acquired, they are queued individually and then trained or merged one by one. However, an observation can be recorded simultaneously by $1 \le \Lambda \le W$ nodes (i.e., by the $\Lambda$ nodes interested in the model the observation is relevant for, and being in proximity of the point at which the observation can be gathered). Model instances resulting from training and merging are uniquely identified by the set of observations used to build the merged instances. Therefore, we also assume that two instances of the same model with a same (aggregate) training set are indistinguishable.
%, independently on whether the observations were locally collected or embedded thanks to merging with other model instances.
We assume however that relevant observations are all those generated within a maximum age, which we denote as $\tau_l$. That is, observations older than $\tau_l$ do not need to be incorporated into a model, and they are not counted as part of the training set. This makes sense in those scenarios in which the process which underlies the generation of the observations is not stationary over time.
%in which case training an ML model with old data might make it unfit for producing accurate inference. 
Finally, we remark that since the model size is finite, an instance of the model can only accommodate a limited number of observations. We therefore assume that if a new observation is added when the model instance is ``full'', the oldest observation is dropped from the training set. 
\section{Performance Analysis of Floating Gossip}
%\com{t: say we do not start from a completely empty system.}
The main goal of the FG scheme is to enable the elaboration of ``well-enough trained" models, and to share them among users within the RZ.
%Note that we only look at observations included in a model through training or merging (it would make little difference to include nodes which have the observation but did not include it in any model, as given that time is continuous, there can be at most one such node in the system, for each observation). \footnote{\com{Alternatives: deliver to users entering the ZOI  \textit{before entering the ZOI}. OR: to at least x percent of users within the ZOI or RZ.}}
 
%Let us assume that the arrival process of observation generation is ergodic.
In what follows, we consider the system at times $t$ in which the process of model sharing within the RZ is in steady state. 
%We assume each model to have a finite capacity, i.e. it is able to store a finite amount of observations. 
%When we incorporate a set of $n$ new observations in a model instance which is full (i.e. whose training set already incorporates the maximum amount of observations possible), the model discards $n$ of the oldest observations present in the training set. \com{!!!}

A first performance metric for our FG scheme is the \textit{observation incorporation rate}, i.e., the rate at which observations with a given age are incorporated in a model. 
\begin{definition} \textit{For a given model, a given age of observation $\tau$, and for a time $t$, the observation incorporation rate $R(t,\tau)$ at time $t$ is the average of the ratio between the amount of observations with age $\in (\tau,\tau +\Delta\tau)$ incorporated in an instance of the model at time $t$, and the time interval $\Delta\tau$, for $\Delta\tau\rightarrow 0$.}
\end{definition}
%
% Such a mean must be intended as a temporal mean, over the time $t$ at which the model instance (and the observation it incorporates) is observed. 
% \com{(vm: this is not clear; it does not seem to be a time average but a statistical average over observations with the same age.)}
%If we consider the system at the mean field limit, then such quantity is also the mean across all model instances present in the RZ. 
%When the transient of the process of model diffusion is exhausted, given that the intensity of the process of observation generation is constant over time, the incorporation rate does not depend on $t$, or on the location of observation generation.
Since we study the system at steady state, we assume that the observation incorporation rate does not depend on $t$, or on the location of observation generation. The observation incorporation rate is an indicator of the effectiveness and of the speed with which our FG scheme is able to store new observations in a model. %\com{It is a function of the likelihood of an observation to be incorporated by training in a model, of the speed at which observations diffuse within nodes in the RZ, and also of how long they manage to persist in the RZ even after that the nodes on which they were generated exit the RZ.} 

Another measure of the ability of the FG scheme to incorporate the most recent measured data is the \textit{model staleness}.
\begin{definition} \textit{The staleness $F_i$ of a model instance is the average time elapsed since the generation of the most recent observation included in the model instance. The staleness $F$ of a model is the average of $F_i$ over all instances of the model present in the RZ.}
\end{definition}
%The staleness $F$ of a model is the time elapsed since the generation of the most recent observation included in an instance of that model, averaged over time and across all instances of a model present in the RZ at a given time.}
%
%\com{I would prefer to use this definition: 
%}
Model staleness is a measure of how much ``up to date" on average an instance of a model is, in a given scenario. Note however that a staleness value of $\tau$ does not say \textit{how many observations} are included in a given interval $(\tau, \tau + \Delta \tau)$. Thus, for instance, it does not necessarily imply that enough observations are included on average in an instance of the model to allow the model to have a comprehensive picture of the status of the observed system at $\tau$. 
% Finally, we assume that those observations relevant for the performance of the application to which each model is associated are all those generated within a maximum age, which we denote as $\tau_l$. That is, we assume that observations older than $\tau_l$ do not need to be incorporated into a model, and they are not counted as part of the training set. This makes sense in those scenarios in which the process which underlies the generation of the data of the observations is not stationary over time, in which case training a ML model with old data might make it unfit for producing accurate inferences.
% \com{(vm: I fail to see why $\tau_l$ is needed in case of non-stationary systems...)\\
% GR: it is not needed, but we assume it derives from application requirements.\\
% (vm:...and what is the problem with having infinite memory in non-stationary systems. BTW, $\tau_l$ can be used also for introducing a finite memory in stationary systems. In any case, to me this is unnecessary complication for this work.\\
% GR: model capacity is a finite resource of the system, directly proportional to model size and thus to all the main performance parameters of this system.)}
%\vspace{-0.15in}
\subsection{Derivation of Observation Incorporation Rate and of Model staleness}
%In this section we present a set of analytical results for evaluating the incorporation rate of a GC scheme and of the mean model staleness as a function of the main system parameters.
%\com{(note that I am skipping here all of the MF detailed proofs, for the moment. Not so serious a limitation, given that the expressions coincide -at least so far- with a simplified version of those for which we already have proofs.)}
%\subsection{MF approach to model availability} 
%\com{MAM: We said previously that we seed observations, not models. GR: is this ok, now?}
%\com{If you assume that every node entering the RZ gets a a model (one with an empty training set, possibly with a set of default values for coefficients), then $a$ is the availability of model with nonempty training set.}.
%With $b(t,A)$ we denote the probability that a node within the RZ is busy exchanging models at time $t$.
\begin{definition} \textit{With reference to a given observation and the node on which it resides, the seeding probability $P$ is the probability that the observation is included in a model instance at the node by the time the node exits the RZ.}
\end{definition}
With $Y$ we denote the probability that at a node the local instance is not merged with a received instance because the training set of the former is a superset 
%(proper or improper) 
of the training set of the latter.
%However, in this work we are not interested in the transient of model diffusion, but in the value which $a_m(t,A)$, $m\in1,...,M$ take when transient is exhausted \com{(say why we are interested only in the steady state, talking about application scenarios.)}.
% \com{\begin{itemize}
%     \item if we talk about quasi stationarity, we should define it here and say when conditions of QS are satisfied.... basically part of what already discussed in the old paper on FC capacity. Or not mention it at all, as everyone else does in this field! maybe a footnote, with a pointer to the FC capacity paper would do.
%     \item say why we are interested only in the steady state, talking about application scenarios.
% \end{itemize}}. 
%To this end, in what follows we will focus on the mean values of $a(t,A)$ and $b(t,A)$ at the mean field limit, which we denote with $a$ and $b$, respectively.
% \footnote{\com{Do we want to define a success probability, e.g. related to a ZOI? Or stick with availability? I think we can do the latter, but say that we adapt to any scheme for model delivery, e.g. at the entrance/exit of ZOI or at exit of RZ. They are all simple functions of availability.}}
% Instead, we consider 
% For this reason, we are interested by the solutions of the system of equations \eref{eq:avail_model} and \eref{eq:b_model}, at the mean field limit, in the quasi-stationary regime. In what follows, we denote with $a$ and $b$ the steady state solutions of the considered ODE.
%The following results give their analytical expressions, as a function of the main system parameters. 
%
\begin{definition} 
\textit{We say that the system is in the substable regime when $P\approx 1$ and $Y \approx 0$.}
%\com{a possible way to set the following result is to say in the proof that we approximate by assuming $P=1$ and $Y=0$ (or that the respective effects they model are neglected).}
\end{definition}
The substable regime implies that the computing load is low enough for an observation to have very high chances to be incorporated in an instance by training. The condition $Y \approx 0$ instead is typically achieved when on average each model instance has a large training set, so that the likelihood that a merging between the local instance and an incoming model does not increase the training set of the local instance is low. In practice, the substable regime assumption is an approximation, which brings to accurate results when the above conditions are met. %\textbf{In the numerical section we will assess the impact of such assumption on the accuracy of our model.}

We now introduce three of the key parameters describing the dynamics of the FG system: the 
\textit{model availability}, the \textit{node busy probability} and the \textit{observation availability}.
\begin{definition} 
\textit{The availability of model $m$ at time $t$, denoted as $a_m(t,A)$, is the mean fraction of nodes possessing a non-default instance (i.e., an instance with a non-empty training set) of the $m$-th model, where $A$ is the RZ area. }
\end{definition}
%{\color{red}{E' NECESSARIA LA DIPENDENZA DA A?}}. 

In general,  
model availability takes different values for different models in a same RZ. 

\begin{definition} 
\textit{The probability that a node within a RZ with area $A$ is busy at time $t$, denoted with $b(t,A)$, is the probability that the node is not available for information exchange with other nodes in contact, at the given time instant.}
\end{definition}
%{\color{red}{We denote with $a$ and $b$, respectively, the mean field limits of $a_m(t,A)$ and $b(t,A)$.}}

The following result states that, $\forall m$, the steady state values of $a_m(t,A)$ and $b(t,A)$ depend neither on the specific trajectories (i.e., on the amount of initial seeders for each model) nor on the time at which each model has been seeded, but only on properties of the system once all transients are exhausted.
%Instead, the steady state value of model availability at the mean field limit is the same for any initial condition, and it depends only on the number of floating models, i.e. on properties of the system once all transients are exhausted.

\begin{lemma}\label{lemma:model_availability} For $t\rightarrow\infty$, in the substable regime the mean values of the variables $a_m(t,A)$, $\forall m$, and $b(t,A)$ at the mean field limit (denoted as $a$ and $b$, respectively) are given by the unique solution of the following fixed point problem in $a$:
\vspace{-0.2in}
\be \label{eq:steady_aa}
%\frac{b}{T_S}a(1-a)S+\frac{\lambda(1-a)}{AD} -\frac{\alpha}{AD}a=0,
a=0.5\left(H+\sqrt{H^2+\frac{4T_S(a)\lambda\Lambda}{bNS(a)w}}\right)
\ee
with 
\begin{align}
&H=1-\frac{T_S(a)(\alpha+\lambda\Lambda)}{bNS(a)w}&\nonumber\\
& b=K-\sqrt{K^2-1}&\nonumber\\
 & K=1+\frac{1}{4gT_S(a)}+\frac{\alpha}{2gN}&\nonumber\\
 %& \gamma= 2Ma[1-a\com{Y}]&\label{mean_exchangeables}\\
  %& \gamma= 2Ma\left[a(1-\com{Y})+\min\left(1-a,\frac{W}{Ma}-1\right)\right]&\label{mean_exchangeables}\\
  & \gamma= 2Mw^2a & \label{mean_exchangeables}\\
   &  S(a) = \int_{t_0}^{+\infty}\min{\left(1,\left \lfloor\frac{(t_c-t_0)}{T_{L}}\right \rfloor\frac{1}{\gamma}\right)}f(t_c)d t_c &\nonumber\\
 &T_S(a)= \int_{0}^{+\infty} \min{\left(t_c,\gamma T_{L}+t_0\right)}f(t_c)d t_c&\nonumber
 \end{align}
where $t_0$ is the transfer setup time, $N$ is the mean number of nodes in the RZ, and $w=\min\left(\frac{W}{M},1\right)$.
%, and it models the time required to prepare for content transfer once two nodes are in contact.
Independently from the initial condition, any trajectory of the system converges to the solution of \eref{eq:steady_aa}.
\end{lemma}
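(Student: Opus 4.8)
The plan is to establish the lemma in four stages: write the mean-field evolution equations for availability and busy probability, reduce them to a single scalar fixed-point equation, prove existence and uniqueness of its root, and then show global convergence of the dynamics to that root. First I would model the availability of a generic model as a birth--death/epidemic process. Per node the non-default instance is \emph{acquired} either by seeding (a fresh relevant observation incorporated by training, at a rate proportional to $\lambda\Lambda$ acting on the fraction $1-a$ of nodes holding no instance) or by gossip propagation (a node holding an instance meeting one without it and completing a successful per-model exchange, with the effective per-contact success captured by $S(a)$ and the contact throughput modulated by $b$), and it is \emph{lost} when a node exits the RZ at per-node rate $\alpha/N$. By the symmetry across models and the substable regime ($P\approx 1$, $Y\approx 0$), all $a_m$ share a common mean-field limit $a$, while $b$ equilibrates on the fast contact timescale and is therefore slaved to $a$ through the contact balance. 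I would derive that balance as a quadratic $b^2-2K b+1=0$ (equivalently $b+1/b=2K$) whose admissible root in $(0,1]$ is $b(a)=K-\sqrt{K^2-1}$ with $K=K(a)$ as stated, collapsing the system to the single variable $a$.

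Second, setting the availability drift to zero gives, for frozen values of the $a$-dependent coefficients, a quadratic $a^2-H a-\frac{T_S\lambda\Lambda}{bNSw}=0$ whose nonnegative root is exactly the right-hand side of \eref{eq:steady_aa}; since $S$, $T_S$, $b$ themselves depend on $a$ through $\gamma=2Mw^2a$, this is a genuine fixed point $a=\Phi(a)$. Substituting $H$ and clearing the radical, I would rewrite it in the cleaner polynomial form $G(a):=a^2-a+E(a)\left[(\alpha+\lambda\Lambda)a-\lambda\Lambda\right]=0$, where $E(a)=T_S(a)/\left(b(a)NS(a)w\right)$. Evaluating the endpoints yields $G(0)=-E(0)\lambda\Lambda<0$ and $G(1)=E(1)\alpha>0$ (using $S(0)=1$ and the finiteness and positivity of $T_S,b,S$ on $[0,1]$), so a root $a^\star\in(0,1)$ exists by the intermediate value theorem.

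Third, the crux is uniqueness. I would establish the monotone dependence of the coefficients on $a$ through $\gamma$: as $a$ grows, $\gamma\uparrow$, hence $S(a)\downarrow$ (more competing exchangeable models lower the per-model transfer success), $T_S(a)\uparrow$ (longer busy intervals), $K(a)\downarrow$ and therefore $b(a)\uparrow$. Uniqueness then follows by showing $G$ crosses zero only once, which I would do by verifying that every root is a strict up-crossing, $G'(a)>0$ whenever $G(a)=0$: differentiating and eliminating $a^2-a$ via the equation $G=0$ reduces $G'$ to an expression of definite sign, and together with $G(0)<0<G(1)$ this forces a single root. This is the step I expect to be the main obstacle, because the effective coefficient $E(a)$ has competing monotonicities---its numerator $T_S(a)\uparrow$ while its denominator factors satisfy $b(a)\uparrow$ and $S(a)\downarrow$---so its net behavior is not immediate, and making the up-crossing sign conclusive will require bounding the logarithmic derivatives of $T_S$, $b$, and $S$ rather than relying on monotonicity of $E$ alone. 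As a fallback I would instead show $\Phi$ is a contraction on $[0,1]$ by bounding $|\Phi'|<1$.

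Finally, for convergence I would exploit that $b$ is slaved to $a$, so the mean-field evolution reduces to a scalar autonomous equation $\dot a=-c(a)\,G(a)$ with $c(a)>0$ on $(0,1)$ (the common positive denominator cleared when forming $G$). The sign pattern $G<0$ on $(0,a^\star)$ and $G>0$ on $(a^\star,1)$ then makes $\dot a>0$ below $a^\star$ and $\dot a<0$ above it, so $a^\star$ is globally asymptotically stable on the physical interval and every trajectory converges to it regardless of the initial seeding. If one does not invoke the timescale separation for $b$, I would instead treat the planar $(a,b)$ system as monotone and rule out periodic orbits via a Dulac/Bendixson argument, combining this with the uniqueness of the equilibrium to conclude global convergence; the scalar reduction is the cleaner route and I would prefer it.
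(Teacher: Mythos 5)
Your core derivation is exactly the paper's: write the mean-field drift ODEs for $a_m(t,A)$ (gossip propagation term $\propto b\,S\,a(1-a)w^2/T_S$, seeding term $\propto \lambda\Lambda(1-a)w/N$, departure term $\propto \alpha w a/N$) and for $b(t,A)$ (contact creation at rate $2g(1-b)^2$ versus completion at rate $b/T_S$ plus departures), set both to zero, and solve the resulting quadratics to get $b=K-\sqrt{K^2-1}$ and the stated fixed point in $a$. The extra machinery you propose for existence, uniqueness, and global convergence (the polynomial $G(a)$, endpoint signs, up-crossing argument, scalar reduction) goes beyond what the paper actually does --- its appendix only outlines the drift and appeals to standard mean-field convergence results --- so the step you honestly flag as unresolved (uniqueness despite the competing monotonicities of $T_S$, $S$, and $b$ in $a$) is not closed by the paper either.
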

For the proof, please refer to \sref{sec:app_lemma:model_availability} in the appendix.
%\com{We had defined $T_L$ before. Is this different? GL yes. This is the amount of time spent transferring all they have to transfer (or that they can transfer) in a contact.}
%In what follows, we consider the system to be in steady state for the process of diffusion of the models.
%\subsection{Derivation of observation availability}

Let us now consider an observation which has been incorporated into the $m$-th model, and let $\tau$ denote its age. %Let $o(\tau,A)$ be the \textit{availability} of an observation with age $\tau$, i.e. the fraction of nodes with the $m$-th model whose local instance includes the given observation.
Let $o(\tau)$ denote the mean \textit{observation availability}, i.e., the mean fraction of nodes with the $m$-th model whose local instance includes the given observation at age $\tau$, at the mean field limit.
\begin{definition}
\textit{Given a node in the RZ, a given model, and a given observation of age $\tau$, at the mean field limit, the mean observation availability $o(\tau)$ is the probability to find that observation of age $\tau$ in the training set of an instance of that model at that node, assuming that 1) that node possesses an instance for that model, and 2) the given observation has been incorporated by training in at least one instance of that model in the whole RZ.}
\end{definition}
This definition implies that, given a node, a model and an observation of age $\tau$, the probability that it is included in an instance of that model on that node is $wao(\tau)$. 
%\com{why do we need model availability a???to distinguish merging from simple model transfer?}
%\com{Do we need both $o(\tau,A)$ and $o(\tau)$?}
%\com{Note: Theorem 1 is currently formulated for the multi-model case.}
Let $d_I$ denote the mean time taken by an observation to be incorporated through training, and $d_M$ the mean time required by an instance of the $m$-th model received by a node to be merged.
\begin{lemma}\label{lemma:arrivalrate} At the mean field limit, in the steady state for the model diffusion process, in the substable regime the mean arrival rate of merging tasks at a node $r$ is $r=MaSw^2g(1-b)^2$,
% \[
% r=MaSw^2(1-Y)g(1-b)^2
% \]
where %$\omega=\left\lceil \lambda (\tau_l-d_I) \right\rceil$, and 
$S$, $a$ and $b$ are given by \lref{lemma:model_availability}.
% \begin{itemize}
%     \item ;
% %    \item $Y=\sum_{\mathbf{e}\in\Omega} \prod_{k=1}^{\omega}o_k^{e_k}(1-o_k)^{1-e_k}\sum_{i=1}^{\omega}\prod_{j=1}^{i}o_j^{e_j}(1-o_j)^{1-e_j}\prod_{j=i+1}^{\omega}(1-o_j)$
%   \item  
% %\item $\tau^*$ is the observation age at which, at the mean field limit, observation availability is equal to one;
% %\item $\Omega$ is the set of all possible binary arrays $\mathbf{e}=(e_1,e_2,...e_j,...,e_\omega)$;
% %\item $o(\tau)$ is the mean observation availability at age $\tau$;
% %\item $o_j=o\left(d_I+\frac{j-1}{\lambda[1-(1-P)^\Lambda]}\right)$
% \end{itemize}
\end{lemma}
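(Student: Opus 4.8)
The plan is to compute $r$ as the expected number of mergeable model instances that a tagged node receives per unit time, obtained by conditioning on the node's contact process and then decomposing the expectation over the $M$ models. Throughout I would invoke the mean-field limit to treat the relevant events---a node being busy, being subscribed to a given model, and holding a non-default instance of it---as mutually independent across the two nodes in contact, with stationary probabilities $b$, $w$, and $a$ taken from \lref{lemma:model_availability}.

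First I would note that the tagged node meets other nodes at rate $g$, and that a contact can produce a merging task only if neither node is busy; by independence this occurs with probability $(1-b)^2$, so the rate of usable contacts is $g(1-b)^2$. Conditioning on one such contact, I would then count the expected number of received instances that actually trigger a merge. Fixing a model $m$, the tagged node can receive and later merge an instance of $m$ only if (i) it is subscribed to $m$, which happens with probability $w$, and (ii) the peer is subscribed to $m$ and already holds a non-default instance, which under the conditional-on-subscription reading of $a$ happens with probability $wa$. In the substable regime $Y\approx 0$, so the peer's instance is, with overwhelming probability, not a subset of the local one and the reception is genuinely mergeable; this is precisely where the substable assumption is used, to avoid counting unproductive exchanges.

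It remains to weight each candidate by the probability $S(a)$ that the instance is actually delivered before the contact ends. I would argue that $S(a)$ is exactly this per-instance transfer probability: during a contact of duration $t_c$ the link can move $\lfloor (t_c-t_0)/T_L\rfloor$ instances, these are drawn at random from the $\gamma=2Mw^2a$ mutually exchangeable instances over both directions, so each is served with probability $\min(1,\lfloor (t_c-t_0)/T_L\rfloor/\gamma)$, and averaging over $f(t_c)$ yields $S(a)$. Multiplying the per-model factors gives $w\cdot(wa)\cdot S=w^2aS$ expected merges per usable contact for model $m$; summing over the $M$ models gives $Mw^2aS$, and multiplying by $g(1-b)^2$ yields $r=MaSw^2g(1-b)^2$. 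As a consistency check, the number of candidate instances flowing toward the tagged node is $Mw^2a=\gamma/2$, so the expected count $(\gamma/2)S(a)$ agrees, confirming that the $1/\gamma$ normalization inside $S(a)$ is the right contention term.

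The main obstacle I expect is making the contention bookkeeping rigorous: one must verify that treating each of the $\gamma$ exchangeable instances as served independently with probability $\min(1,\lfloor (t_c-t_0)/T_L\rfloor/\gamma)$ is consistent in expectation with a random scheduler that transmits a fixed number $\lfloor (t_c-t_0)/T_L\rfloor$ of them, and that the two transfer directions do not interfere in a way that breaks the per-model factorization. The remaining delicate point is justifying the product form $w\cdot wa$ from mean-field independence of subscription and availability across the two peers, and confirming that the substable regime ($P\approx 1$, $Y\approx 0$) indeed allows one to equate ``instance received from a non-default peer'' with ``merging task generated,'' which is what closes the computation.
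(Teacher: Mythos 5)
Your proposal is correct and follows essentially the same route as the paper's (much terser) proof: usable contacts at rate $g(1-b)^2$, times the expected number of successfully transferred mergeable instances per contact, which is computed exactly as the one-directional half of $\gamma=2Mw^2a$ weighted by the per-instance success probability $S(a)$, yielding $r=MaSw^2g(1-b)^2$. Your additional bookkeeping on the role of $Y\approx 0$ and the $\gamma/2$ consistency check only makes explicit what the paper leaves implicit.
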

\forcr{For the proof, please refer to the extended version \cite{OntheLim18:online}.}
\forex{For the proof, please refer to \sref{sec:app_lemma:seedp} in the appendix.} %\com{proof to be simplified and updated accounting for $Y \approx 0$}.
\begin{lemma}\label{lemma:delays} Let $t^*$ be the mean sojourn time in the RZ. When the \textit{stability condition} 
\[\left(\frac{Mw\lambda \Lambda T_{T}}{N} +r T_M\right) \vee\left[\frac{1}{t^*2(1-r T_M)}\cdot\right.
\]
\be\label{eq:condition}
\left.\cdot\left(\frac{r T_M^2}{1-r T_M}+\frac{T_T(2-\frac{M\lambda \Lambda}{N} T_T)}{1-\frac{M\lambda \Lambda}{N} T_T}\right)\right]\leq 1
\ee
holds, $d_M$ and $d_I$ are given by 

\[    d_M= T_M + \frac{r T_M^2}{2(1-r T_M)}+\frac{Mw\lambda \Lambda}{N} {T_{T}}^2
    \]
    \be\label{eq:dm}
    d_I= \frac{1}{1-r T_M}\left(\frac{r T_M^2}{2(1-r T_M)}+T_T+\frac{\frac{Mw\lambda \Lambda}{N} T_T^2}{2(1-\frac{Mw\lambda \Lambda}{N} T_T)}\right)
    \ee
    %Finally, the seeding probability $P$ is given by $P=CCDF_{res}(d_I)$,  i.e. by the complementary cumulative distribution function of the residual sojourn time for a node in the RZ, evaluated in $d_I$. 
 \end{lemma}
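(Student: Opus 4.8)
The plan is to recognize the shared compute resource at a node as a single-server queue with two job classes served under a non-preemptive head-of-line priority discipline, and to read off $d_M$ and $d_I$ as the mean sojourn times of the high- and low-priority classes, respectively. First I would fix the two arrival streams at the mean-field limit. Merging tasks form the high-priority class, arriving by \lref{lemma:arrivalrate} at rate $r=MaSw^2g(1-b)^2$ with deterministic service time $T_M$, while training tasks form the low-priority class, arriving at rate $\lambda_T=\frac{Mw\lambda\Lambda}{N}$ (each of the $M$ models generates observations at rate $\lambda$, each observation is recorded by $\Lambda$ subscribed nodes, spread over the $N$ nodes present and thinned by the subscription fraction $w$) with deterministic service time $T_T$. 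Because at the mean-field limit a tagged node sees the superposition of many weakly dependent, individually rare merging and observation events, I would model both streams as Poisson, which is what makes the $M/D/1$ machinery applicable.

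For $d_M$ I would decompose the merging sojourn time into three contributions: a merging task is delayed by (i) the queue of merging tasks already waiting, which in isolation behaves as an $M/D/1$ system and contributes the standard mean wait $\frac{rT_M^2}{2(1-rT_M)}$; (ii) the residual service of a training task possibly in progress at the arrival instant, the only way a lower-priority job can delay merging under non-preemption, contributing the term $\frac{Mw\lambda\Lambda}{N}T_T^2$; and (iii) its own service $T_M$. Adding these reproduces the claimed expression for $d_M$. For $d_I$ the key idea is that the low-priority training queue is served by a server repeatedly interrupted by higher-priority merging work, so it behaves as an $M/D/1$ queue whose effective capacity is reduced by the merging utilization $\rho_1=rT_M$. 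This is a server-with-vacations (or ``generalized vacations'') decomposition: the fraction of time the server is available to training is $1-rT_M$, which inflates every training delay by the factor $\frac{1}{1-rT_M}$; inside this factor I would collect the training task's own service $T_T$, the mean wait inherited from the merging backlog $\frac{rT_M^2}{2(1-rT_M)}$, and the isolated $M/D/1$ training wait $\frac{\frac{Mw\lambda\Lambda}{N}T_T^2}{2(1-\frac{Mw\lambda\Lambda}{N}T_T)}$, yielding \eref{eq:dm}.

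Finally I would establish the stability condition \eref{eq:condition}. Its first branch, $\frac{Mw\lambda\Lambda}{N}T_T+rT_M\le 1$, is simply the total-utilization requirement $\rho_1+\rho_2\le1$ that keeps the single shared server stable. Its second branch encodes a finite-horizon self-consistency requirement that the stationary formulas tacitly assume but that is not automatic here: because nodes enter and leave the RZ with mean sojourn time $t^*$, an observation is incorporated only if its training delay fits within $t^*$, so requiring a suitably scaled version of $d_I$ to be at most $t^*$ produces the second term, and the two branches are combined by the $\vee$ (maximum).

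The main obstacle, and the place where care is genuinely needed, is precisely this mismatch between the infinite-horizon $M/D/1$ priority analysis and the finite per-node sojourn: I must argue that under \eref{eq:condition} the per-node queues reach their stationary regime fast enough relative to $t^*$ for the steady-state delay formulas to hold, and that the Poisson and independence approximations are consistent with the same mean-field limit that produced $a$, $b$, $S$ and $r$. Once the model and these regimes are pinned down, reconciling the exact residual-service constants for deterministic service is routine.
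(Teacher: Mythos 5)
Your proposal takes essentially the same route as the paper's proof: the paper also models the node's compute resource as an $M/D/1$ queue with two customer classes in which merging has non-preemptive priority over training, invokes standard queueing-theory results for the delay formulas, obtains the first branch of the stability condition from the total-utilization requirement, and obtains the second branch by imposing that the mean delays not exceed the mean sojourn time $t^*$ in the RZ. You simply spell out the "standard results" (residual-service decomposition for the high-priority class, reduced-capacity argument for the low-priority class) that the paper leaves implicit, so the two arguments coincide.
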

 %\footnote{\com{maybe for computation of capacity limits, it would be better to not give priorities to merging or to training, and assume that tasks are scheduled at random among all those in queue.}}
\begin{proof} The system can be modeled as $M/D/1$ queue with two classes of customers, in which merging has non-preemptive priority over training.
For the high priority class, the arrival rate is $r T_M$, and it is $\frac{Mw\lambda \Lambda T_{T}}{N}$ for low priority.
From the condition of stability of the $M/D/1$ queue we derive the condition $r T_M+\frac{Mw\lambda \Lambda T_{T}}{N}\leq 1$. The derivation of delay formulas follows standard results from queuing theory for such systems. The second stability condition derives from imposing that the mean delays for both classes be less than the mean sojourn time in the RZ. 
%Indeed, when this latter condition is not satisfied, the likelihood of a (training or merging) task not being completed because of node exiting the RZ is non negligible, and thus the system is not able to satisfy the demand of computing tasks.
 \end{proof}
 \vspace{-0.1in}
  The following result gives an analytical expression for the incorporation rate, as a function of the  mean value of the observation availability at the mean field limit and over finite time intervals, for large RZ areas, hence for a large amount of nodes involved in the scheme.
\begin{theorem}\label{th:observation_availability} When the stability condition \eqref{eq:condition} holds, at the mean field limit the incorporation rate at age $\tau$ is given by $R(\tau)=\lambda o(\tau)$. The mean observation availability at age $\tau$ at the mean field limit $o(\tau)$ is given by the solution of the following delay differential equation:
\[
\dfrac{d o(\tau)}{d t}=\dfrac{b S(a)w^2}{T_S(a)}\left[(1-a)o(\tau)+\right.
%a o(\tau-d_M)(1-o(\tau-d_M))\right]+%\right .
\]
\be\label{eqtemp}
\left. +a o(\tau-d_M)(1-o(\tau-d_M))\right]-\dfrac{\alpha w}{N}o(\tau)
\ee
with initial condition 
\vspace{-0.2in}
\be\label{eq:bordercond}
    o(\tau)=\left\{
                \begin{array}{ll}
                  0 & \tau<d_I\\
                  \frac{1+(\Lambda-1)}{\left \lceil{a N}\right \rceil} &  d_I\leq \tau \leq d_I+d_M
                  %\frac{e^{a\frac{b S(a)}{T_S(a)}(\tau-d_I-d_M)}}{a N-1+e^{a\frac{b S(a)}{T_S(a)}(\tau-d_I-d_M)}}& \tau > d_I+d_M
                 \end{array}
              \right.
\ee
%$o(\tau)=0$ for $\tau<d_I$ and $o(\tau)=\frac{1+(\Lambda-1)P}{\left \lceil{a N}\right \rceil}$ for $d_I\leq \tau \leq d_I+d_M$.
\end{theorem}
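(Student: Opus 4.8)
Overall, I would treat the diffusion of a single tagged observation through the population as a mean-field epidemic process indexed by the observation age $\tau$, and write a rate (balance) equation for its availability $o(\tau)$. The identity $R(\tau)=\lambda o(\tau)$ is then essentially bookkeeping: observations relevant to the model are generated at rate $\lambda$, and by its definition an observation of age $\tau$ sits in a fraction $o(\tau)$ of the instances that carry the model, so the expected number of age-$(\tau,\tau+d\tau)$ observations found in a tagged instance is $\lambda o(\tau)\,d\tau$, which matches the definition of the incorporation rate. The substantive work is deriving the delay differential equation \eqref{eqtemp} for $o(\tau)$.

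For the gain term I would first extract, from \lref{lemma:model_availability}, the per-node rate at which a given model is successfully transferred to a contacted peer: a node is busy a fraction $b$ of the time and each busy period lasts on average $T_S(a)$, so successful transfers of this specific model occur at rate $\frac{b S(a) w^2}{T_S(a)}$ per node, with $S(a)$ the per-instance transfer-success probability and $w^2$ selecting contacts in which both endpoints subscribe to the model. For each such event I would compute the probability that it newly endows the receiver with the tagged observation, splitting on the receiver's current instance. With probability $1-a$ the receiver holds a default (empty) instance, which it simply adopts, so it acquires the observation iff the transferred instance carries it, i.e. with probability $o(\tau)$. With probability $a$ the receiver holds a non-default instance; the fusion is routed through the merging queue and therefore completes a time $d_M$ after it is initiated, so it newly adds the observation iff the received instance carried it and the local one did not, both evaluated at the initiation age $\tau-d_M$, i.e. with probability $o(\tau-d_M)(1-o(\tau-d_M))$. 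Summing the two cases reproduces the bracket in \eqref{eqtemp}.

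The availability is eroded only by churn: model-carrying nodes leave the RZ and are replaced by entrants with default instances, giving the removal term $-\frac{\alpha w}{N}o(\tau)$, where $\alpha/N$ is the per-node exit rate and the $w$ factor accounts for subscription to the model. Collecting gain minus loss yields the DDE. For the boundary data, the observation cannot appear in any instance before it is trained in, which takes the incorporation delay $d_I$ of \lref{lemma:delays}, so $o(\tau)=0$ for $\tau<d_I$; on the first window $[d_I,d_I+d_M]$ only the $\Lambda$ nodes that directly recorded the observation (one originator plus $\Lambda-1$ co-observers) carry it, since merging has not yet had time $d_M$ to propagate it, so $o=\Lambda/\lceil aN\rceil$ over the $\lceil aN\rceil$ model-carriers, matching \eqref{eq:bordercond}. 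Given this history of length $d_M$ and the fact that the right-hand side is Lipschitz in $o(\tau)$ and $o(\tau-d_M)$, the method of steps constructs a unique solution on each successive interval of length $d_M$, so $o(\tau)$ is well defined for all $\tau$.

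I expect the main obstacle to be justifying the mean-field closure that produces the product $o(\tau-d_M)(1-o(\tau-d_M))$: this requires the instance states of the two endpoints of a contact to be asymptotically independent and identically distributed, which is precisely where the substable regime ($P\approx 1$, $Y\approx 0$) and the large-RZ, large-$N$ limit must be invoked to kill correlations between a node's instance and that of a randomly encountered peer. A related delicate point is treating the merging delay $d_M$ as a deterministic shift of the age argument rather than as a full distribution, and carefully tracking the $w$ and $a$ factors so that the \emph{conditional} fraction $o$ is correctly normalized against the fluctuating set of model-carriers. Once the per-node transfer rate and the per-event gain probability are pinned down, assembling the DDE itself is routine.
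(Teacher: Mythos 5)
Your proposal follows essentially the same route as the paper's proof: you derive the drift of the tagged observation's availability with the identical case split (adoption by a default-instance receiver contributing $(1-a)o(\tau)$, merging at a non-default receiver contributing $a\,o(\tau-d_M)(1-o(\tau-d_M))$ delayed by the merging time, and a churn loss term $\tfrac{\alpha w}{N}o(\tau)$), justify the boundary data via the incorporation delay $d_I$ and the $\Lambda$ simultaneous observers, and then invoke mean-field/PCTMC convergence to pass from the stochastic population process to the deterministic delay differential equation, exactly as the paper does under its ``homogeneous conditions.'' The only additions are cosmetic (the method-of-steps remark on well-posedness and the explicit bookkeeping for $R(\tau)=\lambda o(\tau)$), so no substantive gap or divergence to report.
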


For the proof, please refer to \sref{appendix:theorem_incorporationrate} in the appendix.\\%\com{(to be updated)}.\\
%\com{prove $o(\tau)$ always goes to 1 for growing $\tau$.}\\
This result states that the probability of observing a difference between any point of the trajectory of the availability of a given observation and the solution of \eqref{eqtemp} goes to zero as the RZ area (and hence the mean number of nodes in a RZ) grows. That is, in the mean field limit, the error made by considering a deterministic system characterized by $o(\tau)$, instead of the actual system goes to zero. Moreover, at any time $\tau$ from observation generation, for any $A$, $o(\tau)$ is the expected value of the observation availability at $\tau$ in a finite system. 

\begin{theorem}\label{th:staleness} $\forall i\in \mathbb{Z}^+$, let $\gamma_i=\sum_{k=1}^{i}\xi_k$, with $\xi_k$ i.i.d. $\sim Exp(\lambda)$. When the stability condition holds, the mean staleness $F$ of a model is lower bounded by 

\vspace{-3mm}

\be\label{eq_fresh}
\!F\!\geq\!\frac{\delta\sum_{i=1}^{\infty} i E\left[ o(\gamma_i)\middle
\vert\gamma_i\!\leq\!\tau_l\right]\prod_{j=1}^{i-1}(1\!-\!E\left[ o(\gamma_j)\middle
\vert\gamma_i\!\leq\!\tau_l\right])}{\sum_{i=1}^{\infty} E\left[ o(\gamma_i)\right]\prod_{j=1}^{i-1}(1\!-\!E\left[ o(\gamma_i)\middle\vert\gamma_i\leq \tau_l\right])}\!\!
\ee
\end{theorem}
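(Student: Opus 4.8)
The plan is to read the staleness of a single instance as the age of the most recent observation it contains, and then average this over instances. First I would model the stream of observations relevant to the model as a Poisson process of rate $\lambda$ (consistent with $R(\tau)=\lambda o(\tau)$ of \thref{th:observation_availability}), so that, looking backwards from the present, the age of the $i$-th most recent observation is exactly $\gamma_i=\sum_{k=1}^i\xi_k$ with $\xi_k$ i.i.d. $\sim Exp(\lambda)$, and $\delta=1/\lambda=E[\xi_k]$ is the mean inter-generation time. Fix an instance, conditioned to exist since staleness is defined over instances present in the RZ: by the definition of $o$, a given observation of age $\tau$ belongs to this instance with probability $o(\tau)$, because the unconditional inclusion probability $wa\,o(\tau)$ divided by the probability $wa$ that the node holds an instance leaves $o(\tau)$. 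Under the mean-field and substable assumptions I would treat the inclusion events of distinct observations as conditionally independent given their ages, so that the index $I=\min\{i:\text{observation }i\text{ is included}\}$ of the most recent included observation satisfies $P(I=i\mid\{\gamma_k\})=o(\gamma_i)\prod_{j=1}^{i-1}(1-o(\gamma_j))$, and the staleness of the instance is $\gamma_I$.

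Next I would compute $F=E[\gamma_I]$. The key simplification is that $I$ is a stopping time for the filtration generated by $(\xi_k)$ together with the independent inclusion coins, since $\{I=i\}$ depends only on $\gamma_1,\dots,\gamma_i$; Wald's identity then gives $E[\gamma_I]=E[\xi_k]\,E[I]=\delta\,E[I]=\delta\sum_i i\,P(I=i)$, which accounts for the factor $\delta$ and the leading $i$ in the numerator of \eref{eq_fresh}. The maximum relevant age $\tau_l$ enters because observations older than $\tau_l$ are discarded and never counted towards the training set, so I would restrict attention to observations with $\gamma_i\le\tau_l$ and condition $F$ on the event that a most-recent included observation actually falls inside this window. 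This conditioning produces the conditional availabilities $E[o(\gamma_i)\mid\gamma_i\le\tau_l]$ in the per-index inclusion probabilities and the normalising sum in the denominator, turning $F$ into the displayed ratio; the conditional expectations are computable from $o(\tau)$ of \thref{th:observation_availability} together with the known $Exp(\lambda)$ law of $\gamma_i$.

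The main obstacle is the step from the exact $P(I=i)=E\!\left[o(\gamma_i)\prod_{j<i}(1-o(\gamma_j))\right]$ to the factorised form $E[o(\gamma_i)]\prod_{j<i}(1-E[o(\gamma_j)\mid\gamma_j\le\tau_l])$ appearing in \eref{eq_fresh}: the ages $\gamma_i$ are partial sums of the $\xi_k$ and hence strongly positively associated, while $o$ enters nonlinearly through a product, so the factorisation is not an identity and must be signed as a one-sided bound. Here I would invoke the structural (monotonicity) properties of $o(\tau)$ from \thref{th:observation_availability} on the relevant range, combined with an FKG/positive-association inequality for the partial sums, to control each correlation term, and argue that, after also replacing $\gamma_i$ by its mean $i\delta$ in the conditioned numerator (legitimate since $E[\gamma_i]=i\delta$, but only approximate once one conditions on $\gamma_i\le\tau_l$) and accounting for the $\tau_l$-truncation, the net effect lowers the ratio, yielding the stated lower bound rather than an equality. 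Fixing the direction of this inequality uniformly across the infinitely many correlated factors, and propagating it through the ratio of the two series, is the delicate part; the remainder is routine bookkeeping on the conditional availabilities and the convergence of the two sums under the stability condition.
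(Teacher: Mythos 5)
Your proposal follows essentially the same route as the paper's (sketch) proof: both decompose $F$ over the index $i$ of the most recent included observation, write the probability of that event as $E\left[o(\gamma_i)\prod_{j=1}^{i-1}(1-o(\gamma_j))\right]$ conditioned on $\gamma_i\le\tau_l$, take $E[\tau\mid i]=i\delta$ with $\delta=1/\lambda$, and then pass to the factorised form of \eref{eq_fresh} via a one-sided inequality. The only divergence is the tool invoked for that final factorisation — you propose an FKG/positive-association argument on the partial sums combined with monotonicity of $o$, whereas the paper appeals to Jensen's integral inequality — and your discussion of why that step is delicate (correlated $\gamma_j$, correlation corrections of opposite sign, the $\tau_l$ truncation) is, if anything, more candid than the paper's one-line justification.
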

For the proof, please refer to \forcr{the extended version \cite{OntheLim18:online}.}\forex{Appendix  \sref{appendix:th_staleness}.}\\
\vspace{-0.05in}
\section{Learning Capacity of a FG system}
In this section, we derive a set of results which allow evaluating the key performance indicators of our decentralized FG framework. 
%Namely the maximum of the mean amount of observations which can be learned, i.e. incorporated in a model using our scheme.
In what follows, we mostly refer to the case in which models are neural networks, but considerations may be extended to a large class of deep and shallow learning. 

We know from the literature (see \cite{friedland2017capacity} and references therein)
that when the total number of coefficients of an ML model is larger than the cardinality of the training dataset, then labels can be perfectly recovered. One of the interesting consequences of this result is that it establishes a linear relationship between model size $L$, i.e., the total amount of bits required to represent the model coefficients, and the maximum amount of information that the model can store.
Thus, if a model has length $L$, and $k$ coefficients are needed to encode one bit of information \emph{independently from other coefficients}, with a given ML algorithm, then the ratio $L/k$ is the largest amount of information (expressed in bits) that a model is able to store and recreate with vanishing little error \cite{friedland2017capacity}. 
%One of the interesting consequences of this result is that it establishes a linear relationship between model size $L$, i.e., the total amount of bits required to represent the model coefficients, and the maximum amount of information that the model can store. In what follows we denote with $k=L/Q$ the constant of proportionality between these two quantities. 
Note that typically data in a training set is not i.i.d., and the correlation among data points translates into looser requirements on the maximum amount of information that the model can store than the one indicated by these results.

% \subsection{Non-stationary data generation model}
One of the measures of effectiveness of FG training in a dynamic setting is the \textit{node stored information}.
%More precisely, we take the point of view of a user which, at a random time during his sojourn in the RZ, needs to use one or more of those models. We model such a request as a retrieval of all (i.e. a worst case) of the information stored in the model instances that the users possesses. 
%Such a quantity is a proxy of the quality of the services that, at that point in time, the user might get by using those models to implement them.
\begin{definition}
\textit{Consider a FG system at steady state for the process of model diffusion. The node stored information is the mean of the total amount of observations incorporated in all of the model instances possessed by a node,  with an age not greater than $\tau_l$.}
% the mean amount of information incorporated in all of the model instances possessed by \textit{a node} in our FG system, at steady state for the process of model diffusion. 
\end{definition}
%{\color{red}Is it clear, in Definition 8, that the observations are not incorporated in the node but in the MODEL instance of the node? }
\begin{lemma}\label{lemma:amountofstoredinfoatanode} When the stability condition \eqref{eq:condition} holds, the node stored information
is $M wa \min\left(\frac{L}{k},\lambda \int_0^{\tau_l} o(\tau)d\tau\right)$.
% \vspace{-0.2in}
% \be\label{eq:storedinfo_nonstationary}
% M wa \min\left(\frac{L}{k},\lambda \int_0^{\tau_l} o(\tau)d\tau\right)
% \ee
\end{lemma}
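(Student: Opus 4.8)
The plan is to compute the node stored information by combining the per-observation availability $o(\tau)$ with the rate at which fresh observations are generated, and then to cap the resulting total by the physical storage capacity $L/k$ established earlier via the friedland2017capacity argument. First I would fix a single model and count the expected number of observations of age at most $\tau_l$ that reside in a node's instance of that model. By the definition of the observation availability, conditioned on the node possessing a non-default instance of the model, an observation of age $\tau$ is present in that instance with probability $o(\tau)$. Since observations relevant to the model are generated at rate $\lambda$ (each potentially seen by up to $\Lambda$ nodes, but from the viewpoint of a single node the relevant generation rate of distinct incorporable observations is $\lambda$), the expected number of distinct observations of age in $(\tau,\tau+d\tau)$ carried by that instance is $\lambda\, o(\tau)\, d\tau$. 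Integrating over the admissible age window gives $\lambda \int_0^{\tau_l} o(\tau)\,d\tau$ as the expected number of observations stored in one populated instance.

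Next I would account for the multiplicity of models and the probability that a node actually holds a populated instance. A node subscribes to $w=\min(W/M,1)$ fraction of the $M$ models, so it tracks $Mw$ models, and by \lref{lemma:model_availability} each tracked model is in the non-default state with probability $a$ at the mean field limit. Multiplying the per-instance count by the expected number of populated instances per node, namely $Mwa$, yields a total expected observation count of $Mwa\,\lambda \int_0^{\tau_l} o(\tau)\,d\tau$. This is the ``information-rich'' regime value, valid when the model is not saturated.

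The final step is to impose the capacity ceiling. From the friedland2017capacity bound quoted in the preceding paragraph, a single instance can store at most $L/k$ bits (equivalently, that many independent observations) regardless of how many are offered to it; when a new observation arrives at a full instance, the oldest is evicted, exactly as stated in the modeling assumptions. Therefore the effective number of observations retained per populated instance is the minimum of the offered load $\lambda \int_0^{\tau_l} o(\tau)\,d\tau$ and the capacity $L/k$. Pulling the $\min$ inside (it is taken per instance, then scaled by the expected number $Mwa$ of populated instances) gives the claimed expression $Mwa\,\min\!\left(\frac{L}{k},\,\lambda \int_0^{\tau_l} o(\tau)\,d\tau\right)$, with the stability condition \eqref{eq:condition} ensuring that $o(\tau)$ from \thref{th:observation_availability} is well defined and that queues do not diverge.

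The main obstacle I anticipate is justifying that the per-instance offered load is genuinely $\lambda\int_0^{\tau_l} o(\tau)\,d\tau$ rather than something that double-counts. One must argue carefully that $o(\tau)$ already conditions on the observation having been incorporated somewhere and on the node holding an instance, so that multiplying by the raw generation rate $\lambda$ and integrating does not overcount observations that were never incorporable, nor undercount by ignoring the $\Lambda$-fold simultaneous recording. Establishing that the correct rate of distinct observations contributing to a given node's instance is exactly $\lambda$ (so that the $\Lambda$ factor is absorbed into the dynamics of $o(\tau)$ itself, consistent with how $R(\tau)=\lambda o(\tau)$ is defined in \thref{th:observation_availability}) is the delicate accounting point; the capacity cap and the linear scaling by $Mwa$ are then routine.
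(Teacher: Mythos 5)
Your proposal is correct and follows essentially the same route as the paper: multiply the expected number of populated instances per node, $Mwa$, by the per-instance observation count, which is the minimum of the capacity $L/k$ and the integral over $[0,\tau_l]$ of the incorporation rate $R(\tau)=\lambda o(\tau)$. The paper's own proof is terser and does not dwell on the $\Lambda$-fold accounting subtlety you flag, but the decomposition and the capacity cap are identical.
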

\begin{proof}The probability for a user to possess an instance of a given model is $wa$, and thus $Mwa$ is the mean number of model instances it possesses, at the mean field limit. For each model, the mean amount of observations incorporated is the minimum between the maximum amount that the model can store, and the mean of the total number of observations incorporated in an instance of that model with an age between $0$ and $\tau_l$, given by the integral of the observation incorporation rate within $[0,\tau_l]$. 
%\com{(vm: OK, but it is not clear to me why we have to further multiply the seeding probability and the rate of observation generation per user, i.e., the rate of seeded observation per user. I mean, $L/K$ is already the (max) information that a model can store, so $Ma\min(\cdot)$ looks already as the per-node capacity we are looking for, as stated in the paragraph before the lemma.)}
\end{proof}

Note that at the end of  Section~\ref{ss:assumptions} we have assumed that, in a model which is already storing an amount of information $L/k$, the incorporation of the new data point implies discarding the oldest data point in the model. This assumption corresponds to an ideal case, as in practical cases many models simply degrade their accuracy when incorporating more points than they can store. Thus, \lref{lemma:amountofstoredinfoatanode} gives an upper bound on the total amount of information stored at a node.  

% Another parameter of relevance is the total amount of information stored in the GL system, defined as the set of all the observations stored in all instances present in the RZ, and whose age is not larger than $\tau_l$. 
% \begin{corollary}When the stability condition holds, for a maximum observation age $\tau_l$, the mean total amount of information stored in the GL system is given by 
% \be\label{coroll}
% MwN\min\left(\frac{L}{K},(\tau_l-d_I) \lambda [1-(1-P)^\Lambda] \right)
% \ee
% \end{corollary}

Another key performance indicator of our FG scheme is the \textit{learning capacity}.

\begin{definition}
\textit{In a FG system at steady state, the learning capacity is the maximum achievable ratio between the node stored information and the total observation arrival rate.} %\textcolor{red}{\bf(This definition is not coherent with what defined  half page before, in which $Q$ is a learning capacity expressed in bits  --  WHICH ONE DO WE USE???)}
\end{definition}

The learning capacity gives an idea of up to which point our FG scheme is able to effectively  incorporate in the models possessed by nodes in the region those observations which are continuously generated within it, and thus to keep the models up to date. Specifically, the learning capacity of an FG scheme is the solution of the following problem:

\vspace{-0.16in}

\begin{myproblem}[\textbf{FG learning capacity}\vspace{2mm}]\label{problem:opt:nonstationary}
\begin{align}
\vspace{1in}
%&\nonumber\\
%
& 
\underset{M,L}{\text{maximize}}\;\; wa \min\left(\frac{L}{\lambda 
 k}, \int_0^{\tau_l} o(\tau)d\tau\right) &\nonumber\\
 &\;\;\text{subject to}: \quad \quad&\nonumber\\
%& \frac{M\lambda \Lambda T_{T}}{N} +r T_M\leq 1\;& \label{constraint:space}\\
%& \frac{1}{1-r T_M}\left(\frac{r T_M^2}{2(1-r T_M)}+T_T+\frac{\frac{M\lambda \Lambda}{N} T_T^2}{2(1-\frac{M\lambda \Lambda}{N} T_T)}\right)\leq t^* \;& \label{constraint:space2}\\
 %& \quad \quad P=CCDF_{res}(d_I)\quad&\nonumber\\
 &\quad \quad\text{Equations}\;(\ref{eq:steady_aa}),\;(\ref{eq:condition}),\;(\ref{eqtemp})&\nonumber\\
 %& a=1-\frac{2\alpha T_S(a)}{DRS(a)b}&\label{constraint:am1}\\
 %& a > 0 & \label{constraint:fl1}\\
% & \frac{\lambda \Lambda T_{T}}{N}\leq 1 &\\
 &  \quad\quad M\geq  1,\quad L\geq L_m\quad&\nonumber
  %& \bar{o}(\tau)>0& 
 \end{align}
%\end{equation}
\end{myproblem}
 % A similar optimization problem can be defined for the maximum amount of information stored in the GL system, with the same constraints as Problem 1, and with \eref{coroll} as utility function.\com{I would not even mention this parameter. Defining it entails defining a way to retrieve that operation ("a read operation"), and that would bring us far from the core of the work.}\\
 % \com{ISSUE 2. NLC formula is not ok, the first term should be in n of observations, as the second. Note that we are implicitly assuming  all observations have a same "size". }
\begin{prop}\label{prop:contentsize} If $(M^*, L^*)$ is a solution of Problem 1, then $L^*=L_m$.
\end{prop}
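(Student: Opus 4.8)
The plan is to prove the claim by a monotonicity/perturbation argument at fixed $M$: I will show that the objective of Problem~\ref{problem:opt:nonstationary} cannot increase when $L$ grows, so any maximizer must sit on the lower boundary $L=L_m$. The starting observation is that $L$ enters the objective in only two ways, explicitly through the term $L/(\lambda k)$ inside the $\min$, and implicitly through the transfer time $T_L$, which is non-decreasing in the model size $L$. The quantity $T_L$ propagates into $S(a)$ and $T_S(a)$ of \eqref{mean_exchangeables}, hence into the fixed point $(a,b)$ of \lref{lemma:model_availability}, and into the delay-differential equation \eqref{eqtemp} that defines $o(\tau)$ in \thref{th:observation_availability}.

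First I would establish the required comparative statics. Inspecting \eqref{mean_exchangeables}, increasing $T_L$ weakly decreases $S(a)$ (its integrand $\min(1,\lfloor (t_c-t_0)/T_L\rfloor/\gamma)$ is non-increasing in $T_L$) and weakly increases $T_S(a)$ (since $\min(t_c,\gamma T_L+t_0)$ is non-decreasing in $T_L$). Feeding this into the fixed-point map \eqref{eq:steady_aa} and tracking its dependence on the grouped quantity $T_S(a)/(b\,N\,S(a)\,w)$, I would show the steady-state availability $a$ is non-increasing in $L$. Crucially, I would then argue about the \emph{absolute} observation availability $wa\,o(\tau)$ (the probability, per the definition following \thref{th:observation_availability}, that an age-$\tau$ observation sits in a node's instance) rather than $o(\tau)$ itself: its boundary value in \eqref{eq:bordercond} is $wa\,\Lambda/\lceil aN\rceil\approx w\Lambda/N$, which is $L$-independent, while a larger $T_L$ weakens the spreading coefficient $bS(a)w^2/T_S(a)$ in \eqref{eqtemp}. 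Hence $wa\,o(\tau)$ is non-increasing in $L$ pointwise, and therefore so is $\int_0^{\tau_l} wa\,o(\tau)\,d\tau$. The point of working with the product is that it sidesteps the opposing movement of the boundary condition of $o$ alone (which rises as $a$ falls).

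With these facts I would split on the active branch of the $\min$. If the candidate optimum is availability-limited, i.e. $\int_0^{\tau_l} o\,d\tau\le L^*/(\lambda k)$, the objective equals $\int_0^{\tau_l} wa\,o(\tau)\,d\tau$, in which $L$ enters only through $a$ and $o$; since $w$ does not depend on $L$ at fixed $M$, lowering $L$ toward $L_m$ keeps the point feasible and can only raise the objective, so no maximizer can have $L^*>L_m$ in this branch. The storage-limited branch $L^*/(\lambda k)<\int_0^{\tau_l} o\,d\tau$ is the main obstacle, because there the explicit factor $L/(\lambda k)$ rewards a larger $L$ and must be weighed against the induced decrease of $a$. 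I would close this case by showing it cannot be optimal: a storage deficit can be absorbed by re-balancing the number $Mw$ of instances a node holds rather than by enlarging each instance, so that at a maximizer the binding term in the $\min$ is $\int_0^{\tau_l} o\,d\tau$ and not $L/(\lambda k)$; the problem then reduces to the availability-limited case and yields $L^*=L_m$. Making this exchange rigorous—controlling the simultaneous dependence of $a$, $b$, and $o$ on $(M,L)$ through $\gamma=2Mw^2a$, and ruling out an interior balance point where the two branches of the $\min$ coincide—is the step I expect to require the most care.
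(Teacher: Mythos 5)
Your treatment of the availability-limited branch matches the paper's in substance: the paper likewise argues that $w$ does not depend on $L$, that $\int_0^{\tau_l} o(\tau)\,d\tau$ decreases when $a$ decreases, and that $a$ decreases monotonically in $L$ (because a larger model size means more contact time wasted on transfers that do not complete before the contact ends). Your refinement of tracking the product $wa\,o(\tau)$ rather than $o(\tau)$ alone is a useful strengthening, since the boundary condition \eqref{eq:bordercond} scales like $\Lambda/\lceil aN\rceil$ and therefore moves the wrong way when $a$ falls --- a subtlety the paper's one-line monotonicity claim glosses over. Your remark on the stability constraint loosening as $L$ decreases also appears, almost verbatim, in the paper's proof.

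The genuine gap is in the storage-limited branch, and it is exactly where your argument and the paper's diverge. The paper does not attempt to show that this branch is inactive at the optimum; it instead shows that the first element of the minimum, $waL/(\lambda k)$, is itself maximized at $L=L_m$, by identifying it (up to a constant) with the \emph{area capacity} of the underlying floating-content system and invoking the known monotonicity properties of that quantity in the content size (citing \cite{kaisar2022decentralized}). Your alternative --- asserting that a storage deficit ``can be absorbed by re-balancing $Mw$'' so that the binding term at any maximizer is $\int_0^{\tau_l} o\,d\tau$ --- is neither established nor plausible as stated: the storage-limited regime is a real operating point of the system (the paper's own numerical section shows the learning capacity saturating at the model-capacity limit $L/k$ for large $\lambda$), and perturbing $M$ to compensate simultaneously changes $w=\min(W/M,1)$, $\gamma=2Mw^2a$, the training load $Mw\lambda\Lambda T_T/N$, and the merging rate $r$, so no clean one-for-one exchange is available. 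Without a proof that $waL$ is non-increasing in $L$ at fixed $M$ (or some substitute for the area-capacity argument), the case where the explicit factor $L/(\lambda k)$ is active remains open in your proposal --- and it is precisely the step you yourself flag as requiring the most care.
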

\begin{proof} %\textcolor{cyan}{\bf(Revise it? I cannot follow the arguments)} 
$\frac{wL a}{\lambda k}$ is directly proportional to the \textit{area capacity} of the floating system represented by the RZ area and the $M$ models floating within it (see \cite{kaisar2022decentralized} and references therein).
Thus, for the first element of the minimum in the objective function, Proposition 1 follows from its properties. As for the second element, we note that $w$ does not depend on L. The integral $\int_0^{\tau_l} o(\tau)d\tau$ monotonically decreases when $a$ decreases. At the same time, when keeping $M$ constant, $a$ decreases monotonically as $L$ increases. Indeed, holding constant everything else, the lower the content size, the lower the amount of contact time spent in content transfers which do not complete due to the end of contact time. 
%Moreover, $P$ increases when decreasing $L$, as this brings to a lower computing load, and thus a higher probability to have enough time to complete a training task. 
Finally, for the same reason, the left member of constraint \eqref{eq:condition} decreases when decreasing $L$, increasing the maximum value of $M$ which satisfies all constraints in Problem 1.
%Note that in real systems, in which communication overhead is non negligible, the optimal model size is not the minimal one ($L=1$). For those systems however, the capacity computed in the ideal case represents an upper bound to what achievable when overheads are accounted for.
\end{proof}
 Therefore, Problem 1 can be cast into a maximization problem over $M$ only, with model size equal to its minimum value. Thus, it can be solved efficiently with greedy approaches.

\section{Numerical evaluation}
\label{sec:performance_evaluation}

\begin{figure*}[!t]
	\centering
	\begin{subfigure}[b]{\columnwidth}
		\centering
\includegraphics[width=0.9\columnwidth, height=2.1in]{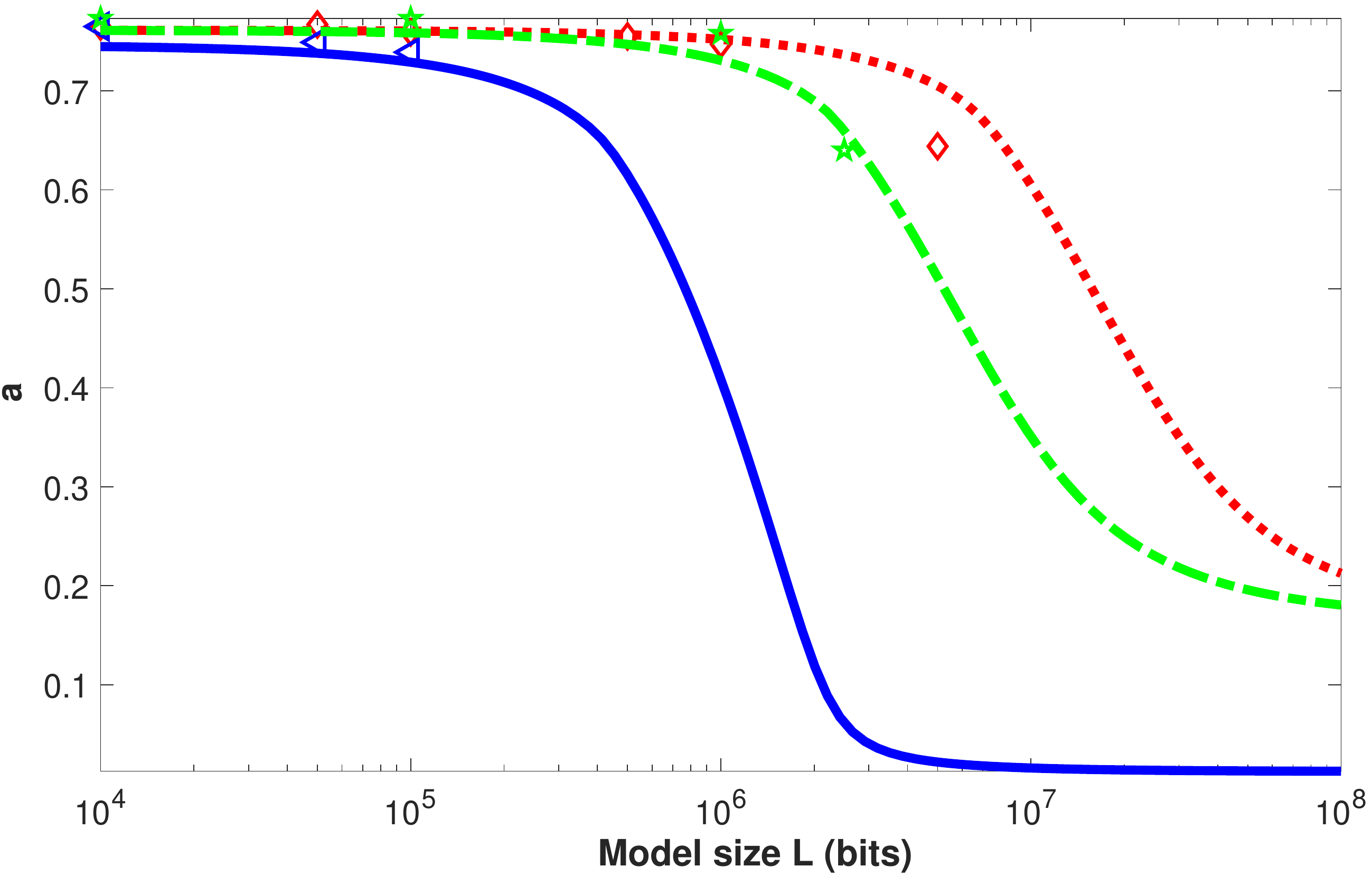} 
\vspace{-1mm}
\caption{Mean model availability.}
		\label{fig:m_availability}
		\vspace{-1mm}
	\end{subfigure}%
 %\hspace{0.1pt}
	\begin{subfigure}[b]{\columnwidth}
		\centering
		\includegraphics[width=0.9\columnwidth, height=2.1in]{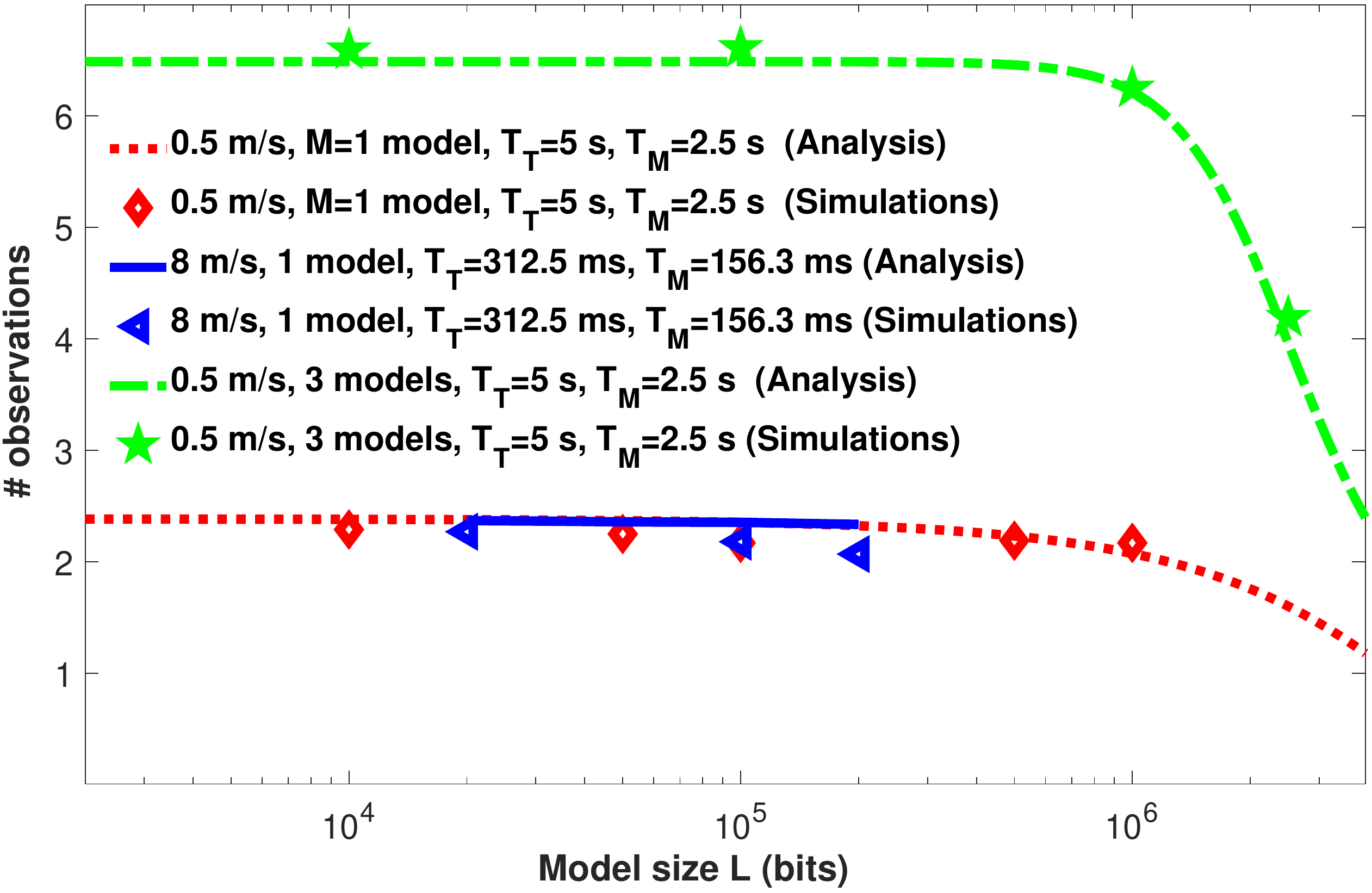} 
\vspace{-1mm}
\caption{Node stored information.}
\label{fig:m_availability}
\vspace{-1mm}
	\end{subfigure}
 \caption{Mean model availability $a$ (left) and node stored information (right) in a FG system versus model size $L$, for different values of training time ($T_T$) and merging time ($T_M$), of node speed, and of number of models $M$. Simulation results (represented by markers) are averages with a $95\%$ confidence intervals of at most $5\%$ (not shown in the figure).}
\label{fig:twographs}
        \vspace{-5mm}
\end{figure*}

In this section, we use the mean field model presented in this paper to characterize the limit performance (in the mean field sense)
of a FG system as a function of the main system parameters, and we assess the accuracy of the model by means of detailed simulations.

In the simulator, we consider a square area with side $200$ m, at the center of which is a circular RZ, whose radius is $100$ m.
A fixed number of nodes (200 is the default value) move according to the Random Direction Mobility Model, with reflections at the boundary of the simulated area. Nodes have
a transmission radius of $5$ m, and enough storage to accommodate an arbitrary number of models. 
When two nodes are in contact, the channel data rate is taken to be constant, equal to $10$ Mb/s.
The default duration of training and merging tasks is $5$ s and $2.5$ s, respectively. The maximum age for an observation is $5$ min, and the default model size is $10$ kb (hence, model exchanges require 2 ms). 
At the beginning of each simulation run, nodes are randomly positioned within the simulation area, and they only possess a default model instance. 
%As the simulation progresses and observations are included by nodes in the RZ, model instances are generated and distributed in the RZ via opportunistic replication\com{all this shouls already have been explained in the system model...}.
%The simulation time advances in slots of constant size, equal to the time needed to send a model instance while receiving another one, hence using half of the available D2D bandwidth {\color{red}Does the previous statement hold for a model size of 10kb?}. The slot duration is chosen with the objective of minimizing the effects of time quantization. 
%and in particular on the errors in detecting when two nodes are in range or when a node is within a given RZ.
Simulation time is divided in equally sized slots, whose duration has been chosen in such a way as to minimize quantization effects on the accuracy of results. Simulations are run until steady state is reached, and transients are discarded.\\
% \begin{figure}[t!]
% \centering
% \includegraphics[width=\columnwidth, height=2.1in]{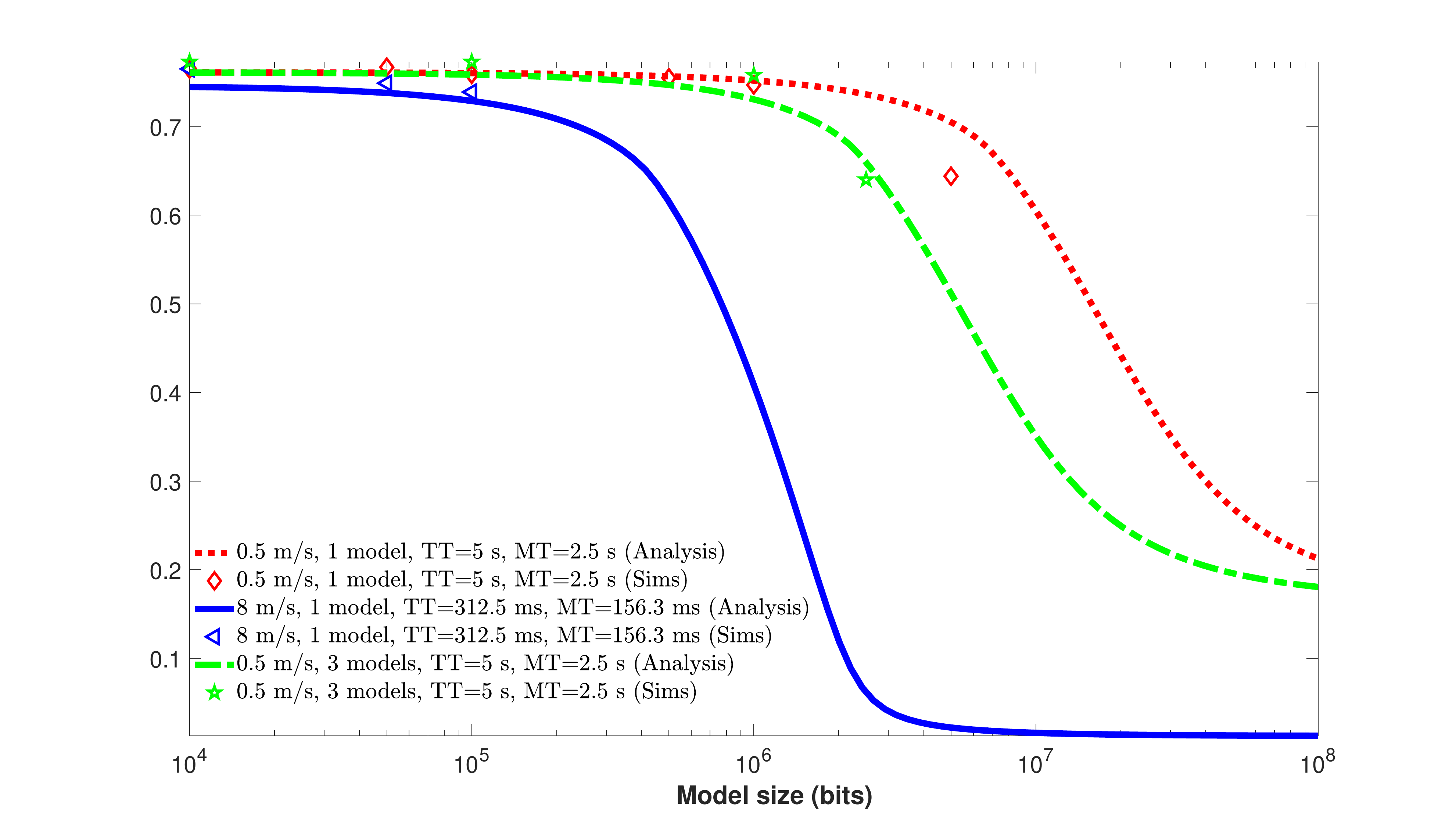} 
% \caption{Mean model availability in a \com{FG} system versus model size, for different values of training time (TT) and merging time (MT), of node speed, and of number of models. Simulation are with a $95\%$ confidence interval of at most $5\%$.}
% \label{fig:m_availability}
% \end{figure}
% \begin{figure}[t!]
% \centering
% \includegraphics[width=\columnwidth, height=2.1in]{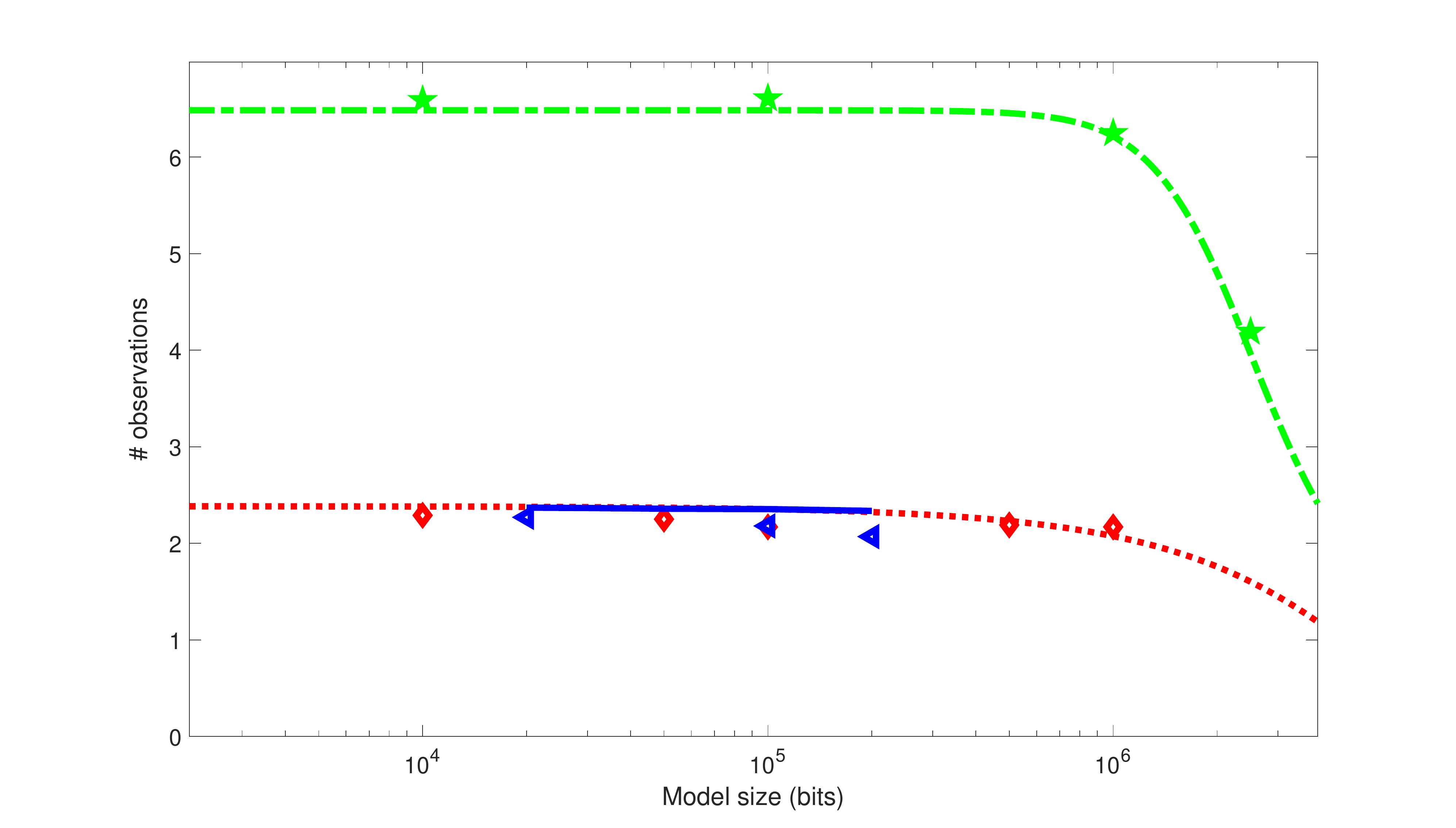} 
% \caption{Node stored Informations in a \com{FG} system versus model size, for different values of training time (TT) and merging time (MT), of node speed, and of number of models. Simulation are with a $95\%$ confidence interval of at most $5\%$.}
% \label{fig:m_availability}
% \end{figure}
In a first set of experiments, we have assessed model accuracy by comparing analytical and simulation results for the mean model availability and the mean number of observations stored by each node, as a function of model size. As \fref{fig:twographs} shows, the estimates derived with our mean field approach well match the simulation results across different values of node speed, of model size, of number of models, and of training and merging times. We can also see that when model size gets close to the maximum amount which can be sustained by the system (due to limitations in contact duration among nodes and in channel throughput), our mean field model yields slightly optimistic results, due to the difference between finite systems and their mean field limit.\\ 
%\com{This gives an idea of the range of validity of our approach.}
\begin{figure}[t!]
\centering
\includegraphics[width=\columnwidth, height=2in]{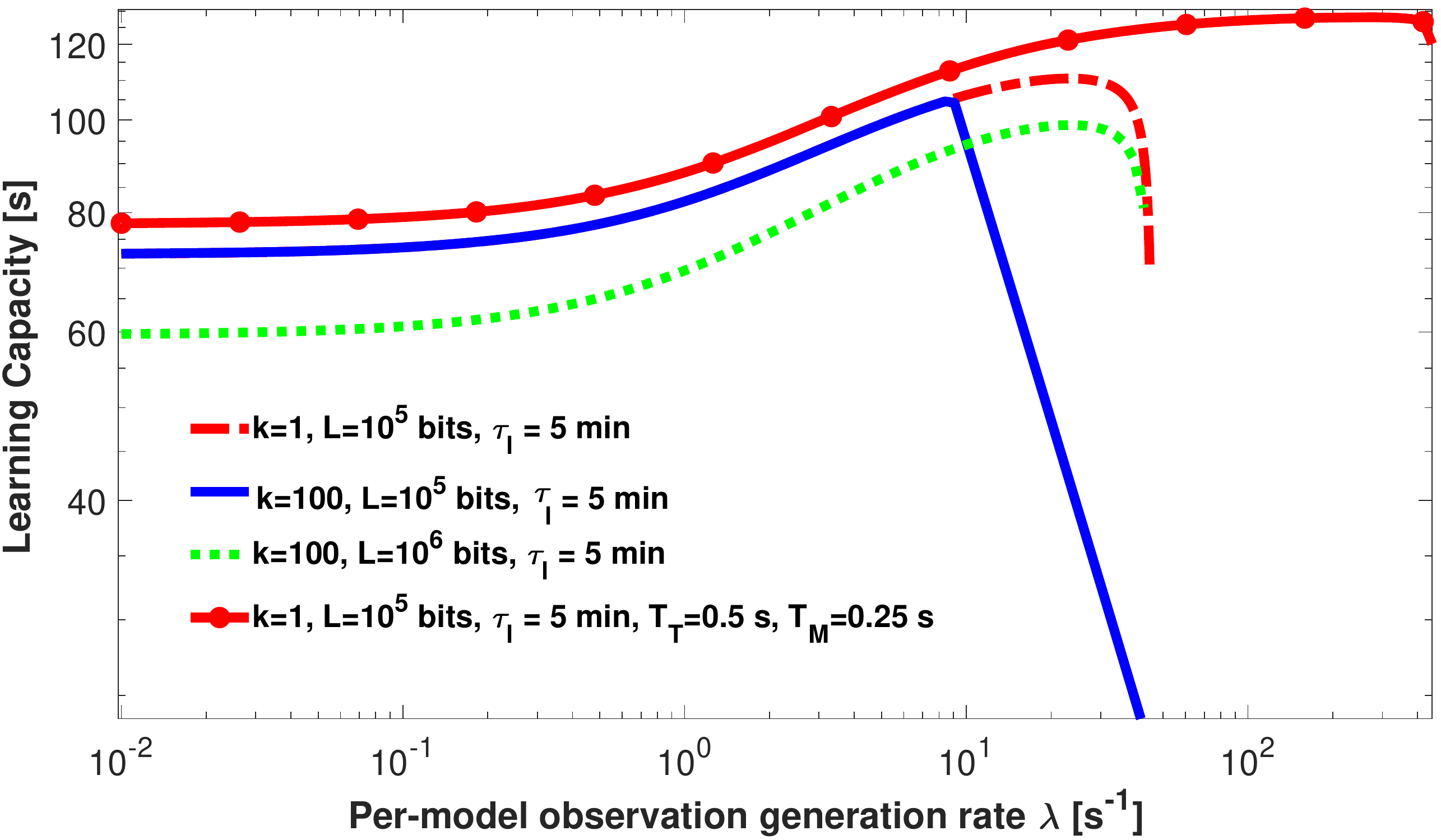} 
\caption{Learning capacity in a FG system versus per-model observation generation rate $\lambda$, as a function of observation lifetime $\tau_l$, model size and capacity ($L/k$), and training/merging times $T_T$ and $T_M$.
%{\color{red} Is the legend for blue line correct? is it M or L = 10^5?}
%\com{Maybe with a linear vertical scale the curves are less flat?}
}
\label{fig:learning efficiency}
\vspace{-5mm}
\end{figure}
To investigate the properties of the FG learning capacity, in \fref{fig:learning efficiency} we plot the optimal values deriving from the solution of Problem 1 as a function of the per-model observation generation rate.
%Recall: node stored information and learning capacity is the min between two terms, one accounting for model capacity, and the other for the mean amount of observations which can be incorporated in a model through the FG training process.
% \footnote{\com{Parameter is of some importance (it shows how learning capacity evolves with load) but limited by the fact that one does not really choose M in order to optimize performance, but rather revenues (it increases M until either the system cannot support them, or it becomes economically suboptimal)}}  
As the plot suggests, when the model capacity $L/k$ is large enough, the learning capacity grows with the observation arrival rate (and thus with the amount of information which needs to be captured). This is due to the fact that a higher rate of arrivals of observations leads to a larger model availability, and thus to a higher chance for observations to be incorporated in a model and to diffuse in the RZ. 
%Thus when the system is in its linear regime (i.e. when performances are not limited by finiteness of contact duration/throughput of wireless channel, AND by maximum computing load sustainable by a node)
%, as the load  increases the system efficiency improves. 
The experiments also show that learning capacity peaks at a point at which further increases in load are not sustainable due to node computing capacity or data transfer capacity limits. For higher loads, it decreases sharply, until the system is no longer stable, i.e., it is no longer able to sustain the demand for computing (due to training and merging) that the arrival rate of observations induces. 
The curves also show that decreasing training and merging times by a factor ten with respect to their default values increases the maximum value of observation arrival rate beyond which the system is unstable by a factor ten. Not surprisingly, decreasing those computing times increases also learning capacity, since it accelerates the diffusion of models and of observations within the considered region. When model capacity is not ``large enough", even if the system is in its linear regime, the amount of observations incorporated (and thus the learning capacity) grows until it hits the model capacity limits. After that, it remains constant, implying a learning capacity that decreases with the inverse of the load.\\
% A higher maximum observation lifetime implies a higher Learning capacity. Curiously, by comparing the red and green curves, a larger model size implies a lower Learning capacity, despite the maximum capacity of the model is larger. this is due to the fact that larger models take more time to be exchanged \com{(and trained, but we did not model that)}, and thus, keeping all else constant, they generally have a lower availability, thus negatively impacting the mean amount of observations incorporated at a node.
\begin{figure}[t!]
\centering
\includegraphics[width=0.9\columnwidth, height=2.1in]{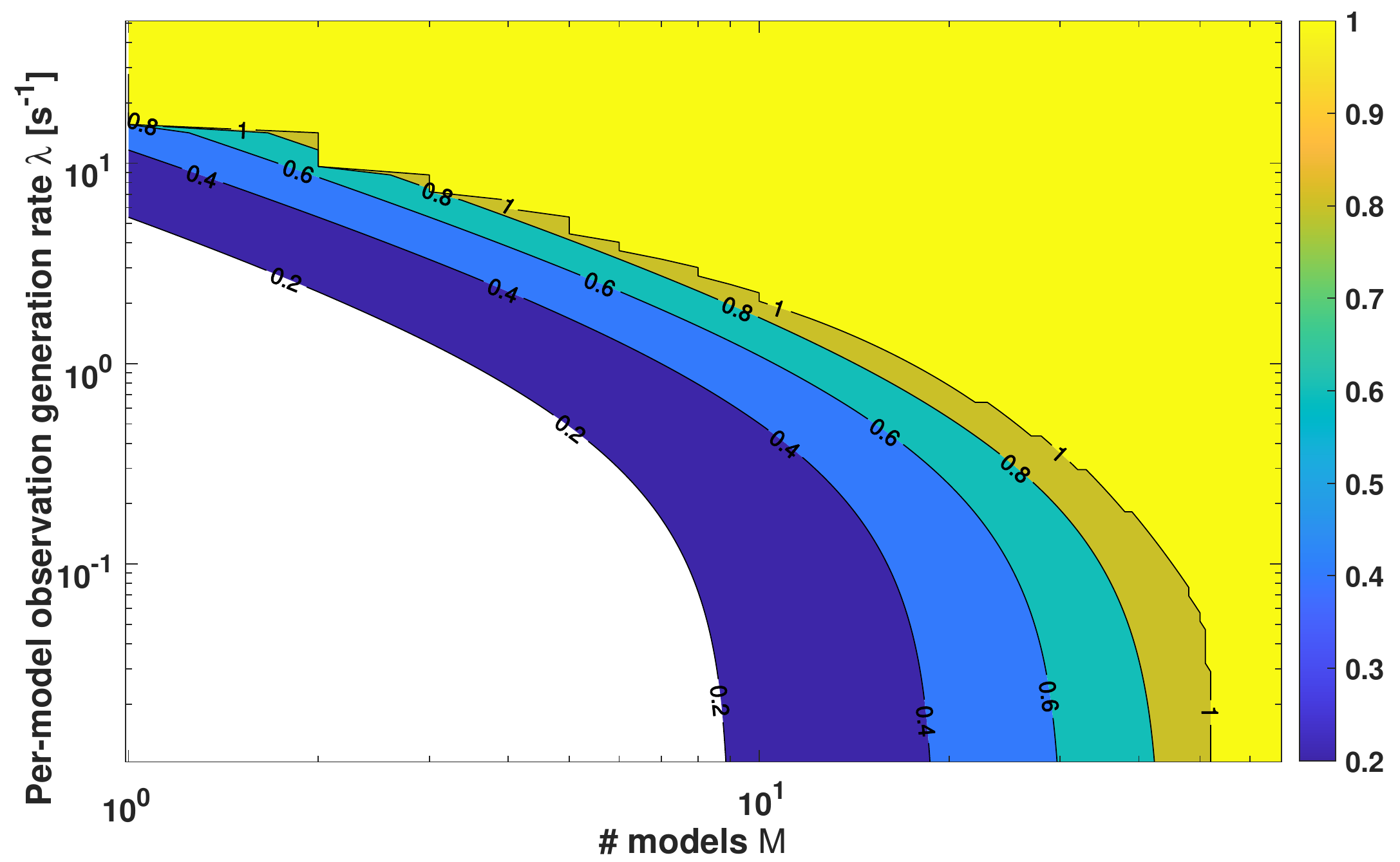}\caption{Value of the left hand side of stability condition \eqref{eq:condition} versus number of models $M$ and per-model observation generation rate $\lambda$, for observation lifetime $\tau_l=300 s$, model size $L=10$~kb, and $k=1$.}
\label{fig:stability}
\vspace{-5mm}
\end{figure}
Next, with \fref{fig:stability} we characterize the stability region of the system, i.e., the range of values of load and of number of models for which \eqref{eq:condition} holds, so that the FG system is actually able to incorporate new  observations in a timely manner. In the figure, the color scale corresponds to the value of the left-hand-side of~\eqref{eq:condition}, which must be below one. Hence, the 
yellow area above the isotonic line at value 1 represents unstable points. The same line also illustrates the tradeoff between number of models and observation generation rate. In this specific case, the system is stable with at most about 40 models with observations generated every 100 seconds, but also with just one model whose observations can be generated as frequently as about twenty times per second, which is a rate suitable to capture a video.  
From \fref{fig:stability}, it emerges that for small values of per-model observation generation rate (with respect to observation lifetime), the system is not limited by node computing load due to training and merging, rather by the maximum amount of models which it can sustain 
%(i.e,. allow to persist with very high probability in the region) 
for a given model size. Indeed, from the figure, stability depends mainly on number of models in the system.
Note that, for a given maximum age of observations, there is a lower bound on the minimum per-model observation generation rate, below which models are likely to incorporate very few (if any) observations, thus making the whole FG system useless.
When observation generation rate increases, the system performance is progressively more dependent on the maximum computing load it can tolerate. Clearly, in this ``computing-limited" regime, an increase in the number of models entails not only an increase in the training load, but also an increase in overall observation arrival rate, and thus merging load. Indeed, as the plot shows in this regime, keeping the stability condition unaltered after such an increase would entail decreasing the per-model observation arrival rate, with a relationship of (roughly, logarithmically) inverse proportionality. 
Note that discontinuities in the plot are due to quantization in the number of models.

%\subsection{Staleness}
 \begin{figure}[t!]
\centering
\includegraphics[width=\columnwidth, height=1.8in]{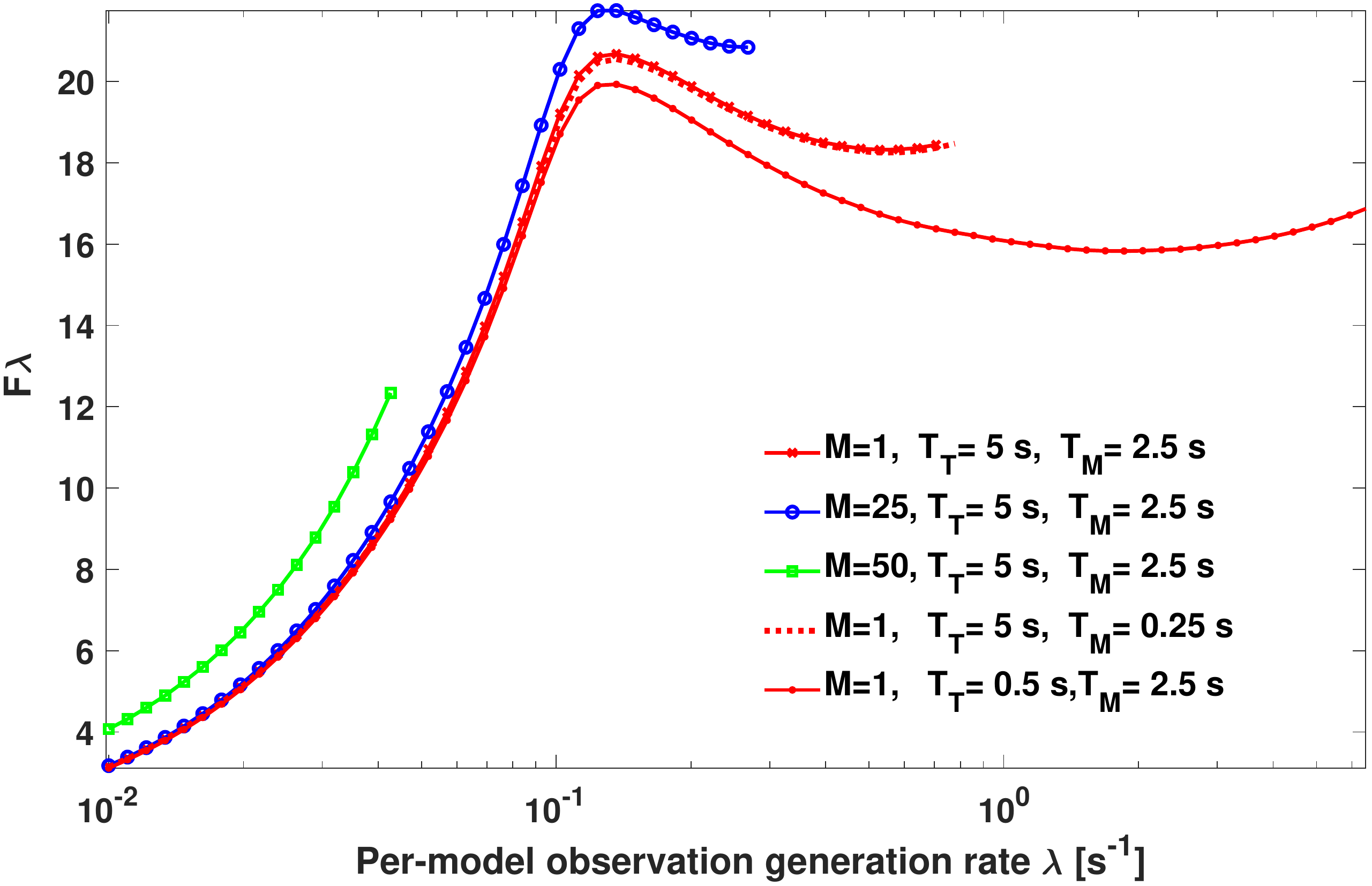} 
\caption{Ratio of model staleness $F$, computed via its bound~\eqref{eq_fresh}, over per-model mean observation interarrival time $1/\lambda$, versus per-model observation generation rate $\lambda$.  
Observation lifetime is $\tau_l=300$ s, $k=1$, with model size $L=10$~kb.}
\label{fig:staleness}
\vspace{-5mm}
\end{figure}

Coming to model staleness, we report in Fig. \ref{fig:staleness} the model prediction normalized to the per-model mean observation interarrival time, versus the per-model observation generation rate $\lambda$, in the case of observation lifetime $\tau_l=300$ s, $k=1$.
As we can expect, staleness decreases for increasing observation generation rate, as it is the 
%absolute  measure 
age of the most recent observation incorporated in a model.
However, we are interested in how quickly an observation is incorporated into a model. 
%For this reason in \fref{fig:staleness} we have plotted the ratio of model staleness over per-model mean observation interarrival time, versus per-model observation generation rate. 
As we can see, for low values of $\lambda$, an increase of $\lambda$ brings the FG system to incorporate more quickly an observation, as a higher observation arrival rate implies a larger model availability and thus a more effective diffusion of observations in the region. However, for large values of $\lambda$, the average computing load of nodes becomes high enough to actually slow down the process of diffusion of observations. Thus, in this operating regime the normalized model staleness first peaks and then decreases. For high enough values of $\lambda$ (where curves stop), the system becomes unstable. As expected, this takes place for lower values of $\lambda$ in the case of a higher number of models. 
The figure also shows that by increasing the number of models floating in the RZ, the normalized staleness increases, as expected. However, the increase is less than linear with the number of models. At the peak, moving from $M=1$ to $M=25$ models only incurs a staleness increase of about 10\%. From this observation we conclude that FG scales relatively well with $M$.
%    \item Finally, we have assessed the impact of the computing cost of the training and merging operations, on the above mentioned ratio. (...) 
\vspace{-0.1in}
\section{Conclusions}
\label{sec:conclusions}
In this paper we have developed a mean field model to investigate the limit performance of Floating Gossip, a distributed learning scheme based on Floating Content for the opportunistic exchange of ML models stored by mobile users. %We validate our mean field model by comparison with a detailed simulator, and we study the maximum amount of data that users can incorporate into their models, as a function of the main system parameters.
Our study shows that Floating Gossip can be very effective in implementing opportunistic, fully distributed, privacy preserving, training and update of ML models in a cooperative manner.
%, until the communication and computing resources in the system reach overload. 
Differently from centralized training schemes, whose limit performance is heavily degraded by the effects of node churn and mobility, gossip-based  schemes have proven to be very robust against those factors, and to depend heavily on nodes' ability to harvest data about the environment in which they operate. 
%{\color{red}{POSSIAMO METTERE QUALCOSA DI PIU' SPECIFICO COME SUMMARY OF RESULTS OVVERO COME TAKEAWAY MESSAGE?}}
\section*{Acknowledgments}
This work was partially supported by COST INTERACT, by SNF Dymonet project, and by the AEON-CPS project (TSI-063000-2021-38) funded by the Spanish Ministry of Economic Affairs and Digital Transformation.
%\appendix
\begin{appendix}
\subsection{Proof of \lref{lemma:model_availability} (outline)}
\label{sec:app_lemma:model_availability}
First, it is easy to see that, with the given assumptions, the number of non-default model instances possessed by a node is distributed as a binomial $B(Mw,a_m(t,A))$. The mean number of non-default model instances possessed by a node in stationary state is thus $Ma_m(t,A)w$.\\
To derive the mean value of the variables $a_m(t,A)$ and $b(t,A)$ at the mean field limit, we first derive the differential equations for the drift \cite{kolesnichenko2012applying,bortolussi2013continuous}.
Let $N_a$ denote the mean number of nodes in the region possessing a given model at time $t$.
We have therefore $N_a=a_m(t,A) A D$, where $D$ is node density.
The rate of change of $N_a$ is given by the sum of three components. The first is given by those nodes which complete a model instance transfer in a contact with another node, $\frac{A D b(t,A)}{T_s}a_m(t,A)(1-a_m(t,A))S(t)$. The second term is given by the rate at which a node receives an observation for that model (i.e., $\frac{\lambda\Lambda}{ D A}$) multiplied by the probability that the given node does not have the model yet (i.e., $(1-a_m(t,A))w$). Finally, the term $\frac{\alpha}{N}w a(t)$ accounts for the rate at which nodes with a model instance leave the given region.\\
Let us now consider the rate at which the mean number of busy nodes in the region evolves over time. Its increase is due to contacts (taking place at a rate of $g A$), among two non-busy nodes (with a probability $(1-b(t,A))^2$). Its decrease is due to the end of the process of model exchange with another node. Such an event takes place with a mean rate
$\frac{A D b(t,A)}{T_s}$. Putting all together, and normalizing by $A D$, we get
\begin{align*}
\begin{cases}
\displaystyle 
\dfrac{d a_m(t,A)}{d t}=\dfrac{b(t,A)}{T_s(t)}a_m(t,A)(1-a_m(t,A))S(t)w^2+
\vspace{3mm}
\\
+\dfrac{\lambda\Lambda(1-a_m(t,A))w}{ D A}-\dfrac{\alpha}{DA}w a_m(t,A)
\vspace{3mm}
\\
\dfrac{d b(t,A)}{d t}=2g(1-b(t,A))^2 - \dfrac{b(t,A)}{T_S(t)} -\dfrac{2\alpha}{DA}b(t,A) 
\end{cases}
\label{eq:ode}
\end{align*}
% Thus the system is able to reach \textit{full stationarity}, not just quasi-stationarity.\\
From this drift expression, by applying standard results, it is easy to prove the convergence to the mean field limit, from which the steady state expressions for $a$ and $b$ are obtained. 

\forex{\subsection{Proof of \lref{lemma:arrivalrate} (outline)}
\label{sec:app_lemma:seedp} 
If $g$ is the mean contact rate, of all contacts, those which may bring to an exchange of models are those in which both nodes are not busy. The probability of such an event is $(1-b)^2$. The mean amount of models received during an exchange is derived in a similar way as $\gamma$ in Lemma 1.}

\subsection{Derivation of \thref{th:observation_availability}}
\label{appendix:theorem_incorporationrate}
%$\lambda[1-(1-P)^\Lambda]$ is the rate of arrivals of observations which are incorporated in at least one instance of the model to which they are associated.
%Then \eref{eq_incorprate} descends from the definition of incorporation rate.\\
%As for the derivation of $o(\tau)$, 
We derive the drift, which describes the average local variation of observation availability over time. We assume the system satisfies the  \textit{homogeneous conditions}. That is, that at $t=0$ the mean number of nodes per unit surface possessing a given model is the same for all models; that at any time instant, nodes possessing a given model are uniformly distributed within the region; and that the probability of a node to have that model is independent from the probability of any other node to have the same model. 
\begin{lemma}\label{lemma:balanceequations} When the system satisfies the homogeneous conditions, the \emph{drift} of the CTMC is given by %the following delay differential equation (DDE)
$\dfrac{d o(\tau,A)}{d \tau}=\dfrac{b S(a)}{T_S(a)}a o(\tau-d_M,A)(1-o(\tau-d_M,A))$,
with initial condition $o(\tau,A)=0$ for $\tau< d_I$, and $o(d_I,A)=\frac{1}{\left\lceil N a\right\rceil}$.
\end{lemma}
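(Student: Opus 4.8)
The plan is to model the diffusion of a single \emph{tagged} observation as a continuous-time Markov chain (CTMC) whose state $n(\tau)$ is the number of model-holders whose local instance already contains that observation when the observation has age $\tau$, and then to read off the drift of the scaled variable $o(\tau,A)=n(\tau)/\lceil Na\rceil$ in the mean-field limit. Since I condition on the model-diffusion process being at its steady state, the number of model-holders is $\lceil Na\rceil$ and the busy probability is $b$, both supplied by \lref{lemma:model_availability}; the only transitions that change $n$ are then of birth type $n\to n+1$, in which a model-holder that does not yet carry the observation acquires it through a successful instance exchange followed by a merge. The drift is the expected per-capita rate of these up-transitions, $\tfrac{1}{\lceil Na\rceil}\,\tfrac{d}{d\tau}E[n(\tau)]$.

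First I would compute the up-rate by mirroring the counting used in the proof of \lref{lemma:model_availability}. A node enters contact-driven exchanges at rate $b/T_S(a)$ (fraction busy divided by mean busy duration), and in a given contact a specific model instance is actually transferred with probability $S(a)$. Under the \emph{homogeneous conditions}, a uniformly chosen contact partner is a model-holder carrying the tagged observation with probability $a\,o(\tau,A)$, so a given susceptible model-holder acquires it at rate $\tfrac{b\,S(a)}{T_S(a)}\,a\,o(\tau,A)$. Multiplying by the $\lceil Na\rceil\bigl(1-o(\tau,A)\bigr)$ susceptible model-holders and dividing back by $\lceil Na\rceil$ yields the per-capita drift $\tfrac{b\,S(a)}{T_S(a)}\,a\,o(\tau,A)\bigl(1-o(\tau,A)\bigr)$. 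The homogeneous conditions---spatial uniformity and independence of which nodes hold the observation---are exactly what let me replace the expectation of the product ``partner carries it'' times ``I lack it'' by the product $o(1-o)$ of the two marginals; this mean-field factorization is the analytical heart of the argument.

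Next I would insert the merge delay. A transfer completed at age $s$ incorporates the observation into the receiver's instance only at age $s+d_M$, since merging takes mean time $d_M$ (\lref{lemma:delays}). Hence the births \emph{realized} at age $\tau$ are governed by the infected/susceptible split as it stood at age $\tau-d_M$, with both the sender factor $o(\tau-d_M,A)$ and the receiver factor $1-o(\tau-d_M,A)$ evaluated at that earlier instant; this turns the ordinary differential equation into the stated delay-differential equation. The boundary data then follow from timing alone: for $\tau<d_I$ the observation has not even completed its first training incorporation (training delay $d_I$, \lref{lemma:delays}), so it resides in no instance and $o\equiv 0$; at $\tau=d_I$ it is present in exactly the single instance of the node that recorded and trained on it, i.e.\ a fraction $1/\lceil Na\rceil$ of the model-holders.

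I expect the main obstacle to be the simultaneous handling of the mean-field factorization and the non-Markovian delay. Justifying that the joint event ``partner holds the model and carries the observation'' decouples cleanly into the product $a\cdot o(\tau,A)$, with no spatial or correlation-induced corrections, rests delicately on the homogeneous conditions; and pinning down that \emph{both} the sender and the receiver factor must be read off at $\tau-d_M$ (rather than one at $\tau$ and one at $\tau-d_M$) requires care about exactly where along the merge pipeline the partner sampling and the receiver state are evaluated.
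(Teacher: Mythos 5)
Your proposal is correct and follows essentially the same route as the paper: a mean-field birth-rate computation for the tagged observation in which the per-node exchange rate $b/T_S(a)$, the transfer success probability $S(a)$, and the homogeneity-based factorization $a\,o\,(1-o)$ combine into the stated drift, with the merge delay $d_M$ shifting both factors and the training delay $d_I$ fixing the initial condition $1/\lceil Na\rceil$. The only difference is that the paper's proof also enumerates the transfer-to-default-model channel and the node-exit term (which surface in the final ODE of Theorem~\ref{th:observation_availability} rather than in the lemma's stated drift), whereas you derive exactly the term appearing in the lemma statement.
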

\begin{proof} The derivation goes along the same lines as that of Lemma 1. 
 Let $N_o(\tau,A)$ denote the number of nodes in the RZ possessing a model instance including the given observation, at a given time $\tau$ from observation generation, for a RZ area A.
The rate at which this quantity varies over time is given by the sum of three components. The first is due to contacts between, on one side, a node with a model which incorporates the given observation, and a node possessing only the default model.
The mean rate at which nodes exit the busy state is given by the ratio between the mean number of busy nodes at time $\tau$ in the region, given by $\bar{N}(A)b$, and the mean time taken by an exchange between two nodes, $T_S(a)$. Thus the rate of relevant events is $\frac{\bar{N}(A)b}{2T_S(a)}$. Moreover, let us consider one of these terminating exchanges. The probability that the model was transferred to the node possessing only the default model during such exchange is equal to the probability that the sender node had the model and that the receiving node did not have it, given by $2a(1-a)$, multiplied by the probability that the transferred model incorporated the given observation, $\frac{N_o(\tau,A)}{a\bar{N}(A)}=o(\tau,A)$, and by the probability that the model was transferred successfully, given by $S(a)$. Summing up, the first contribution is $\frac{\bar{N}(A)b S(a)}{T_S(a)}a(1-a)o(\tau,A)$.
Similarly, for the second contribution, which accounts for the case in which the exchange is among two nodes each with a non-default model, we have $
\frac{\bar{N}(A)bS(a)}{T_S(a)}a^2o(\tau-d_M,A)(1-o(\tau-d_M,A))$. This correspond to the rate at which nodes conclude the merging operation, after receiving the instance with the given observation. Thus the observation availability that matters is the one at the moment in which the exchange took place, i.e., at time $\tau-d_M$. The third contribution is given by nodes exiting the RZ, and it is given by the rate at which nodes exit the RZ (i.e., $\alpha$) multiplied by the fraction of those exiting nodes which has the model and incorporates the observation (i.e., $a o(\tau,A)$). Putting all together, and normalizing, we get 
the final expression of the drift.\end{proof}
% we have
% \[
% \dfrac{d N_o(\tau,A)}{d t}=\dfrac{\bar{N}(A)b S(a)}{T_S(a)}\left[a(1-a)o(\tau,A)\right .
% \]
% \be
% \left . +a^2 o(\tau-d_M,A)(1-o(\tau-d_M,A))\right]-\alpha a o(\tau,A)
% \ee
% Now, we have that $N_o(\tau,A)=\bar{N}(A)a o(\tau,A)$. We normalize by the mean total number of nodes with the model within the RZ, i.e. by $\bar{N}(A)a$. We thus get
% \[
% \dfrac{d o(\tau,A)}{d t}=\dfrac{b S(a)}{T_S(a)}\left[(1-a)o(\tau,A)\right .
% \]
% \be\label{eqtemp}
% \left . +a o(\tau-d_M,A)(1-o(\tau-d_M,A))\right]-\dfrac{\alpha}{\bar{N}(A)}o(\tau,A)
% \ee
% From \eref{eq:steady_aa}, we have that
% \[
% \frac{2\alpha}{DR}=(1-a)\frac{b S(a)}{T_S(a)}
% \]
% By substituting this into \eref{eqtemp} we get \eref{eq:dde}.

% \be\label{eq:solutiondde_2}
%     \bar{o}(\tau)=\left\{
%                 \begin{array}{ll}
%                   0 & \tau<d_I\\\\
%                   \frac{1}{\left \lceil{a N}\right \rceil} &  d_i\leq \tau \leq d_I+d_M\\\\
%                   \frac{e^{a\frac{b S(a)}{T_S(a)}(\tau-d_I-d_M)}}{\left \lceil{a N}\right \rceil-1+e^{a\frac{b S(a)}{T_S(a)}(\tau-d_I-d_M)}}& \tau > d_I+d_M
%                  \end{array}
%               \right.
% \ee
%\end{proof}
%$o(\tau,A)=0$ for $\tau< d_I$, and $o(d_I,A)=\frac{1}{N a}$
\vspace{-0.1in}
The rest of Theorem 1 proof follows the  standard procedure for mean field convergence, i.e., convergence of initial conditions, and condition on the drift of the Population Continuous Time Markov Chain (PCTMC) associated to the system.
As its drift is continuous for $\tau\geq d_I$, let $\bar{o}(\tau)=\lim_{R\rightarrow\infty}o(\tau,A)$. As our sequence of PCTMC models satisfies these properties, by Theorem 1 in \cite{kolesnichenko2012applying}  for any finite time horizon $T\leq \infty$, the sequence of population models associated to $A$ converges \textit{almost surely} to the dynamics of the ordinary differential equation in~\eqref{eqtemp}, which proves Theorem 1. 
\forex{\subsection{Derivation of \thref{th:staleness}}
\label{appendix:th_staleness}
% \begin{proof} The mean rate at which observations associated to a given model arrive in the RZ is $\frac{\lambda\Lambda}{M}$. The mean rate at which they are included successfully in at least one model instance is thus $[1-(1-P)^\Lambda]\frac{\lambda\Lambda}{M}$.

% \end{proof}

\begin{proof} (sketch). For a given time $t$, let $i\in\mathbb{Z}^+$ be the label of the $i$-th last observation generated in the system, counting backward in time from $t$. We aim at computing the expectation of the time $\tau$ since the generation of the last observation incorporated in a given model. We have
$E[\tau]= \frac{\sum_{i=1}^{\infty}E[\tau|i]P(i)}{\sum_{i=1}^{\infty}P(i)}$
where $E[\tau|i]$ is the expectation of $\tau$ conditioned on having observation $1$ to $i-1$ not included in the model, and observation $i$ included, while $P(i)$ is the probability of such an event. As arrivals are Poisson with intensity $\lambda$, $E[\tau|i]=\frac{i}{\lambda}$. $P(i)$ is computed taking into account that arrivals are independent, and that the probability of an observation generated $x$ seconds ago to be included in a given model is $o(x)$. We have thus
$F=\frac{\delta\sum_{i=1}^{\infty} iE_{\xi_1,...,\xi_i}\left[ o(\gamma_i)\prod_{j=1}^{i-1}(1-o(\gamma_j))  \middle
\vert\gamma_i\leq \tau_l\right]}{\sum_{i=1}^{\infty} E_{\xi_1,...,\xi_i}\left[ o(\gamma_i)\prod_{j=1}^{i-1}(1-o(\gamma_j))  \middle\vert\gamma_i\geq \tau_l\right]}$.
\eref{eq_fresh} derives from applying  Jensen's integral inequality to the above expression for staleness $F$.
\end{proof}}

\end{appendix}

% (just for communications)
% \begin{align}
% \begin{cases}
% \displaystyle 
% \dfrac{d a(t)}{d t}=\dfrac{2g}{D}a(t)(1-a(t))S(t) -\dfrac{2\alpha}{DR}a(t),
% \end{cases}
% \label{eq:ode}
% \end{align}

%\com{THIS WAS IN THE INTRO -- To be moved elsewhere or deleted}

\forcr{\clearpage}
\bibliographystyle{IEEEtran}
\bibliography{IEEEabrv,0.main}
%\appendix

% \newpage
% xx\\
% \newpage
% \input{Noelias_numeval.tex}

%\theendnotes
%\input{10.draft} 
%old things
%\input{9.algorithm}
%\theendnotes
\end{document}